\documentclass{article}
\usepackage[preprint]{neurips_2026}

\usepackage[utf8]{inputenc}
\usepackage[T1]{fontenc}
\usepackage[hidelinks]{hyperref}
\usepackage{url}
\usepackage{booktabs}
\usepackage{amsfonts,amsmath,amssymb,amsthm}
\usepackage{graphicx}
\usepackage{placeins}
\usepackage{float}
\graphicspath{{figs/}}
\usepackage{microtype}
\usepackage{xcolor}
\usepackage{pifont}

\newtheorem{theorem}{Theorem}
\newtheorem{proposition}{Proposition}
\newtheorem{lemma}{Lemma}

\theoremstyle{remark}
\newtheorem{remark}{Remark}

\newcommand{\Ex}{\mathbb{E}}
\newcommand{\Ot}{\widetilde{O}}
\newcommand{\etwo}{E_2}
\newcommand{\lamtwo}{\Lambda_2}
\newcommand{\Sset}{\mathcal{S}}
\newcommand{\cm}{\ding{51}}
\newcommand{\xm}{\ding{55}}

\workshoptitle{Second Workshop on ML$\times$OR: Mathematical Foundations and Operational
Integration of Machine Learning for Uncertainty-Aware Decision-Making}

\newcommand{\vhat}{\widehat{\bar v}}
\newcommand{\repourl}{https://github.com/melikabaghi/state-exp3}
\title{Pooling and Drift in Delayed Bandits}
\author{%
  Melika Baghi \\
  Georgia Institute of Technology \\
  \texttt{mbaghi3@gatech.edu}
}

\begin{document}
\maketitle

\begin{abstract}
A system often has to act long before it learns whether the act worked: a recommender sees a click
in seconds and a purchase in days. For $K$ actions and a delay of $d$ rounds, methods that see only the
action already attain the minimax-optimal dependence on the delay, and with an intermediate state
visible the best rate known is $\Ot(\sqrt{(K+d)T})$ over $T$ rounds.
Both are
governed by the number of actions, so a longer menu is always more expensive to learn from, and
neither adapts to instances in which many actions are effectively interchangeable. They need not
be. If the outcome depends on the action only through the state it produced, a click or a browse or
a cart, then one late outcome informs every action that could have produced the state observed, and
the price is set by how many genuinely different states the actions produce. We charge each delayed
outcome through the state and pay an \emph{effective dimension} $v_t$, between $1$ and the number of
states, in place of $K$. For the algorithm that is actually run we prove
$\Ot(\sqrt{V^-}+\sqrt{dT})$, where $V^-$ is any budget on $\sum_t(v_t-1)$ fixed in advance, so the
action count leaves the leading term and the delay enters additively. The gain has a limit: even when handed the exact losses from $d$ rounds ago, no algorithm escapes
$\Omega(\sqrt{d\mathcal E\min\{1+\log J,T/d\}})$, where $J$ counts the drifting directions and
$\mathcal E$ bounds how far the losses move while the learner waits. On generated data the state
channel cuts regret by up to $79$ per cent against action-level weighting, and on the funnel family
by $32$ to $68$ per cent against a rate-optimal action-only delayed-bandit
baseline under the tuning protocol we report. The
guarantees assume the action-to-state map is known, and every experiment is synthetic.
\end{abstract}

\section{Introduction}

Operations commit before they learn. A markdown is set and sell-through is known a season later;
staff are rostered and the shift's outcome lands after it; marketing spend is committed before
attribution resolves; an item is recommended and the purchase it may cause is days away. In each,
the decision is made now and the quantity it will be judged by arrives late, while something
cheaper arrives quickly: a fitting-room visit, a shift's first hour, a click. Whether that early
signal can cut the cost of learning is a question between two fields. What may be inferred from the
signal is a matter of data; the wait before the outcome resolves is a constraint on the operation,
which no estimator removes, and the decision commits while only the first of the two is settled.

The cost of learning under delay should be set by how many genuinely different regimes the actions
induce rather than by how many actions there are: a catalogue of
thousands of items that funnels users through half a dozen engagement patterns should be no harder
to learn from than a catalogue of six.

The mechanism behind that claim is that the outcome depends on the action only through the
intermediate state it produced, so one late outcome can be charged to every action that could have
produced the state observed rather than to the one action played, pooling it across them. Bandits with intermediate
observations formalize the setting: the learner picks one action and sees the consequence of that
action alone. \citet{esposito2023} show that the \emph{state-to-loss} map decides the complexity, an
unrestricted one giving only the rate available with no intermediate observations; we fix that map
and ask what the \emph{action-to-state} structure buys. The delayed-feedback literature has since
developed along other axes: \citet{baron2025} obtain instance-dependent bounds when the
observations themselves evolve, and \citet{levy2025} treat contextual actions under delay with
general function approximation. In neither does an intermediate state carry one delayed outcome
across several actions.

We answer that question with an \emph{effective dimension} $v_t$, measuring how distinguishable
the action-induced state distributions are under the current play; it lies between $1$ and the
number of states and can fall far below $K$. Theorem~\ref{thm:m1} pays $v_t$ in place of $K$ for the
algorithm that is actually run, and across the families the measured gain tracks $v_t$ rather
than $K$. The claim has a boundary. Waiting
carries a cost of its own, which sharing cannot remove. Writing
$m_t=c_{t-d}$ for the action-loss vector from $d$ rounds earlier and
$\etwo=\sum_t\|c_t-m_t\|_\infty^2$ for how far the losses move while the learner waits,
Theorem~\ref{thm:lower} lower-bounds the regret this forces even when $m_t$ is supplied free
(Figure~\ref{fig:one}). Adapting to overlap and drift at once remains open
(Section~\ref{sec:conj}). Four ingredients separate the neighbouring settings, and it is their
combination that is new here (Table~\ref{tab:settings}).

\begin{figure}[t]
\centering
\includegraphics[width=\textwidth]{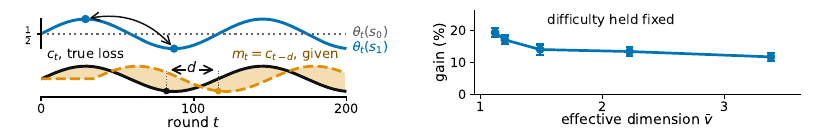}
\caption{\textbf{The claim and its boundary, on generated data.} \textbf{Left,} what
each state predicts changes while the action-to-state map stays fixed, so losses from $d$ rounds
earlier grow inaccurate; $\etwo$ of Section~\ref{sec:setting} tracks it. The marked points are the
true loss at $t-d$ and the same value handed to the learner at $t$, which the rule spans. \textbf{Right,} more
overlap allows more sharing, at matched difficulty, $K=40$, $|\Sset|=8$, $T=10000$, $d=50$, $20$
paired seeds, bars one standard error of the paired difference (Appendix~\ref{app:num}, Experiment~1).}
\label{fig:one}
\end{figure}

\begin{table}[H]\centering\small
\caption{Where this work sits: the four ingredients that separate the neighbouring settings.}
\label{tab:settings}
\begin{tabular}{@{}lcccc@{}}
\toprule
 & delayed & state seen & $P$ known & instance-dependent \\
\midrule
\citet{zimmert2020}   & \cm & \xm & \xm & \xm \\
\citet{esposito2023}  & \cm & \cm & \xm & \xm \\
\citet{eldowa2024}    & \xm & \cm & \cm & \cm \\
\citet{baron2025}     & \cm & \xm & \xm & \cm \\
this paper            & \cm & \cm & \cm & \cm \\
\bottomrule
\end{tabular}
\end{table}

\citet{eldowa2024} is closest, and $v_t-1$ is exactly the $\chi^2$ mutual information they
introduce, so that quantity is theirs; what is new is its role once the outcome is delayed. Other
neighbouring settings are placed in Appendix~\ref{app:transfer}.

\section{Problem formulation}\label{sec:setting}

The question is whether the intermediate state can lower the cost of learning below what $K$ alone
forces. The model that makes it precise is the following.
There are $T$ rounds, $K$ actions, a finite state space $\Sset$, a known action-to-state matrix
$P$, and a fixed known delay $d$; all
logarithms are natural and $\Ot$ hides logarithmic factors. Every round follows the chain
$A_t\xrightarrow{\;P\;}S_t\xrightarrow{\;\theta_t\;}X_t$: the learner draws $A_t$ from a distribution $x_t$,
observes $S_t\sim P(\cdot\mid A_t)$ at once, and receives the delayed outcome only at time $t+d$, encoded as a loss
$X_t\in[0,1]$ so that smaller is better, with
$\Ex[X_t\mid\mathcal F_{t-1},A_t,S_t]=\theta_t(S_t)$, where $\mathcal F_t$ is the environment's history
through round $t$, including outcomes generated but not yet delivered; outcomes are conditionally
independent across rounds given the states. The conditional mean depends on $A_t$ only through $S_t$. This
\emph{state-sufficiency} assumption licenses cross-action pooling: an outcome observed at state
$S_t$ may update every action that could have produced that state. If an action affects the outcome
directly, in a way not represented in $S_t$, the argument no longer applies. The learner may use all states observed so far, but
among outcomes only those from rounds $r\le t-d-1$ have been delivered. \textsc{State-EXP3} changes
its play distribution only when an outcome arrives, so for this algorithm $x_t$ is
$\mathcal F_{t-d-1}$-measurable.

The matrix $P\in[0,1]^{K\times|\Sset|}$ is row-stochastic and fixed, and the \emph{state-loss means} $\theta_t\in[0,1]^{|\Sset|}$ are \emph{oblivious}:
chosen in advance, not reacting to the learner. Action losses are $c_t=P\theta_t$ and performance
is the static \emph{pseudo-regret}, the gap to the best single action in hindsight,
$\Ex[R_T]=\Ex[\sum_tc_t(A_t)]-\min_a\sum_tc_t(a)$. \textbf{Oracle for the lower bound.} For Theorem~\ref{thm:lower} only, the learner also receives the \emph{stale action-loss vector} $m_t\triangleq c_{t-d}$ before choosing $A_t$, with $m_t=c_1$ for $t\le d$. This oracle information is unavailable to
\textsc{State-EXP3}, so a lower bound that holds even when it is supplied is stronger. We measure how misleading $m_t$ is by $\etwo=\sum_t\|c_t-m_t\|_\infty^2\le T$, which stays small when
errors are small even if frequent. The geometry of $P$ governs sharing across actions and the
evolution of $\theta_t$ governs staleness across time.

\section{Pooling through the state}\label{sec:alg}

These two sources of difficulty separate cleanly. Pooling replaces the leading action-count
dependence by an effective dimension, while no amount of sharing removes the cost of waiting.

An arriving outcome need not update the action played alone. Every action can be credited by how
likely it was to have produced the observed state. Here $P$ is known and $m_t$ is not given. For each state
$s\in\Sset$ let $q_t(s)=\sum_{a=1}^Kx_t(a)P(s\mid a)$, and give each action $a\in\{1,\dots,K\}$ the
estimate $\hat c_t(a)=P(S_t\mid a)X_t/q_t(S_t)$. It is correct on average, and its noise is governed by the
\emph{effective dimension}
$v_t=\sum_sq_t(s)^{-1}\sum_ax_t(a)P(s\mid a)^2\in[1,|\Sset|]$.
\textsc{State-EXP3} is \textsc{EXP3} \citep{auer2002} at learning rate $\eta$ on these estimates: it
adds $\hat c_t$ to its running total once the outcome is usable, at round $t+d+1$. A round costs
$O(K)$ (Appendix~\ref{app:pool}, with pseudocode and the $\chi^2$ reading of $v_t$).

\begin{theorem}\label{thm:m1}
Let $P$ be known and $(\theta_t)$ oblivious, and fix in advance any $V^-$ with
$\sum_t(v_t-1)\le V^-$ always.
Then \textup{\textsc{State-EXP3}} with any $\eta\le1/(e(d+1))$ satisfies
$\Ex[R_T]\le\sqrt{2(3.3V^-+2dT)\log K}+e(d+1)\log K+d$.
\end{theorem}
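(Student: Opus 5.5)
The proof follows the standard delayed-EXP3 decomposition. Write $y_t$ for the EXP3 distribution built from all importance-weighted estimates $\hat c_1,\dots,\hat c_{t-1}$, which are not yet delivered, and $x_t$ for the distribution actually played, built from $\hat c_1,\dots,\hat c_{t-d-1}$. Regret then splits as
\[
\sum_t\langle x_t-e_{a^\star},c_t\rangle=\sum_t\langle y_t-e_{a^\star},c_t\rangle+\sum_t\langle x_t-y_t,c_t\rangle .
\]
The first sum is bounded by ordinary (non-delayed) EXP3 analysis on the cheating sequence. The second is a drift term controlled by how far $d$ estimates can move the exponential weights.

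\emph{Unbiasedness and second moment.} Since $x_t$ is $\mathcal F_{t-d-1}$-measurable and $\theta_t$ is oblivious, state-sufficiency gives
\[
\Ex[\hat c_t(a)\mid x_t]=\sum_s q_t(s)\,P(s\mid a)\,\theta_t(s)/q_t(s)=c_t(a).
\]
For the second moment, $X_t^2\le1$ yields
\[
\Ex\Bigl[\sum_a x_t(a)\hat c_t(a)^2\Bigm| x_t\Bigr]\le\sum_s q_t(s)^{-1}\sum_a x_t(a)P(s\mid a)^2=v_t .
\]
The local-norm term of EXP3 must be evaluated under $y_t$ rather than $x_t$. Because $\eta\le 1/(e(d+1))$ and the per-round estimates satisfy $\sum_a x_t(a)\hat c_t(a)=X_t\le1$ (this is the point of pooling), the multiplicative change between $x_t$ and $y_t$ over $d$ rounds stays within a factor $e$ or so. This stability step produces the constant $3.3$ and the additive $e(d+1)\log K$: rounds where stability fails, or the first $d$ rounds, are charged directly.

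\emph{Drift term.} I would bound $\langle x_t-y_t,c_t\rangle\le\sum_{r=t-d}^{t-1}\langle x_r-x_{r+1},c_t\rangle$ using $\|x_{r+1}-x_r\|_1\lesssim\eta\,\langle x_r,\hat c\rangle$-type bounds. In expectation this gives roughly $\eta\sum_t\sum_{r}\Ex[\langle x,\hat c_r\rangle]\le\eta d T$, with the $\hat c$ losses in $[0,1]$ after averaging. The boundary rounds contribute the $+d$. Altogether
\[
\Ex[R_T]\le\frac{\log K}{\eta}+\frac{\eta}{2}\bigl(3.3\textstyle\sum_t v_t+2dT\bigr)+\ldots .
\]

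\emph{Main obstacle: replacing $\sum_t v_t$ by $V^-$.} $\sum_t v_t\le T+V^-$, and the leftover $T$ has to be absorbed. The plan is to use the standard trick that adding a constant shift to all estimates leaves EXP3's play unchanged. Centre the estimator as $\hat c_t-\lambda$, so that $\sum_a x_t(a)\bigl(\hat c_t(a)-X_t\bigr)^2$ has conditional mean $\le v_t-1$; this is a $\chi^2$-variance identity, since $\sum_s q_t(s)^{-1}\sum_a x_t(a)P(s\mid a)^2-1$ is the $\chi^2$ information. Because $v_t$ depends on $x_t$, which is random, the tuning must use a deterministic budget; hence "fix in advance $V^-$ with $\sum_t(v_t-1)\le V^-$ always."

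Finally choose $\eta=\min\bigl\{\sqrt{2\log K/(3.3V^-+2dT)},\,1/(e(d+1))\bigr\}$. When the minimum is the second branch, $\log K/\eta=e(d+1)\log K$ gives the additive term; otherwise AM-GM gives $\sqrt{2(3.3V^-+2dT)\log K}$. I expect the delicate part to be keeping the centring compatible with the delayed stability argument, since the shifted estimates can be negative and the exponential-weights inequality $e^{-z}\le1-z+z^2$ needs $\eta\,|\hat c-X|\le1$. That is where $\eta\le1/(e(d+1))$ together with $P(s\mid a)/q_t(s)$ bounds must be checked; if it fails, I would fall back to a log-barrier or truncation argument charged to the $e(d+1)\log K$ term.
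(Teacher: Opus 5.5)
Your decomposition, centring and tuning are the paper's. Your cheating iterate $y_t$ is exactly $x_{t+d}$, so your split is the paper's split into $\langle x_{r+d},\hat c_r\rangle-\hat c_r(a^\ast)$ and $\langle x_r-x_{r+d},\hat c_r\rangle$. Your drift step is a harmless variant of the paper's once the index is fixed: the telescoping runs over $r=t,\dots,t+d-1$, and $\langle x_r-x_{r+1},c_t\rangle\le\eta\langle x_r,\hat c_{r-d}\rangle$ has expectation at most $\eta$ because $x_r$ is $\mathcal F_{r-d-1}$-measurable.

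The gap is that the two steps carrying the theorem are asserted or left open. The first is the sure stability $x_{t+d}(a)\le e\,x_t(a)$, which does not follow directly from $\langle x_t,\hat c_t\rangle=X_t\le1$. The update at round $t$ has normaliser $Z_t=\sum_bx_t(b)e^{-\eta\hat c_{t-d}(b)}\ge1-\eta\langle x_t,\hat c_{t-d}\rangle$, which weights $\hat c_{t-d}$ by $x_t$. The identity controls it only under $x_{t-d}$, and passing from one to the other needs the very stability being proved. The paper closes this loop by strong induction on the one-step claim $x_{t+1}\le(1+1/d)x_t$. Composing it over the previous $d$ steps gives $x_t\le e\,x_{t-d}$, hence $\langle x_t,\hat c_{t-d}\rangle\le eX_{t-d}\le e$, hence $Z_t\ge1-\eta e\ge d/(d+1)$, hence $x_{t+1}\le x_t/Z_t\le(1+1/d)x_t$. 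The bound must hold on every path and for every action, since it is used as $x_{r+d}(a)\le e\,x_r(a)$ inside the $\mathcal F_{r-1}$-conditional expectation to turn the local norm under $x_{r+d}$ into $e(v_r-1)$. There are no rounds on which it fails. Such rounds could not be ``charged directly'' anyway, because $\hat c_r(a)$ can be as large as $1/x_r(a)$. The additive $e(d+1)\log K$ is only $\log K/\eta_{\max}$ from the capped tuning, as your last paragraph says.

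The second is the exponential-weights step for the centred estimates. Your stated requirement $\eta|\hat c_t(a)-X_t|\le1$ is stronger than needed and can fail, since $\hat c_t(a)$ is unbounded above. A log-barrier or truncation fallback would analyse a different algorithm from \textsc{State-EXP3}. Only a lower bound is needed: by Lagrange's remainder $e^{-z}\le1-z+\tfrac{e^{\eta}}{2}z^2$ for every $z\ge-\eta$, and $\zeta_r(a)=\hat c_r(a)-X_r\ge-X_r\ge-1$ gives $\eta\zeta_r(a)\ge-\eta$. With $\log(1+y)\le y$ and shift invariance of the weights, this gives the potential bound with second-order term $\tfrac{\eta e^{\eta}}{2}\langle x_{r+d},\zeta_r^2\rangle\le\tfrac{\eta e^{1+\eta}}{2}\langle x_r,\zeta_r^2\rangle$. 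The constant comes from $e^{1+\eta}<3.3$ for $\eta\le1/(2e)$. The inequality $e^{-z}\le1-z+z^2$ you quote would give $2e$ in its place and miss the stated bound. For the conditional variance, keep $\gamma(s)=\Ex[X_t^2\mid\mathcal F_{t-1},S_t=s]$ inside the state sum, $\Ex[\langle x_t,\zeta_t^2\rangle\mid\mathcal F_{t-1}]=\sum_s\gamma(s)\bigl[\sum_ax_t(a)P(s\mid a)^2/q_t(s)-q_t(s)\bigr]\le v_t-1$, each bracket being nonnegative by Jensen.
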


Thus the leading action-count dependence is replaced by $V^-$, which measures how differently the
actions behave rather than how many there are, while the delay enters additively at $\Ot(\sqrt{V^-}+\sqrt{dT})$, for the algorithm every
experiment below runs. Writing $\bar v$ for the largest $v_t$ over all ways of playing,
$V^-=(\bar v-1)T$ is admissible and depends on $P$ alone. An interleaved variant that runs
$d+1$ copies, and a version that merges similar states, are deferred to
Appendix~\ref{app:etc}; neither adapts to $\etwo$.

The proof uses one key identity. The denominator of $\hat c_t$ is exactly the probability of the
observed state under the current play, so $\langle x_t,\hat c_t\rangle=X_t\le1$ on every path. This
limits how far the play distribution can move while an outcome is in flight and controls the extra
term created by delay; subtracting $X_t\mathbf 1$ in the analysis replaces $v_t$ by $v_t-1$
(Appendix~\ref{app:m1}).

\paragraph{The cost of waiting.}\label{sec:lower} Waiting carries a cost that persists even when the
stale losses are supplied free. Theorem~\ref{thm:lower} isolates that cost from time alone: the
construction gives each of the $J$ drifting directions a private state, which removes useful
pooling and forces $J\le|\Sset|-1$ (Appendix~\ref{app:proofs}).

\begin{theorem}\label{thm:lower}
Fix $T$, a delay $1\le d\le T/2$, a drift budget $\mathcal E\le T/4$, and
$1\le J\le\min\{K-1,|\Sset|-1\}$. Every algorithm, even one handed the exact losses from $d$ rounds
ago, meets some instance fixed in advance with $\etwo\le\mathcal E$ on which
$\Ex[R_T]=\Omega(\sqrt{d\,\mathcal E\min\{1+\log J,\,T/d\}})$. Without that help, and if $K\le T$,
the best regret any algorithm can guarantee is
$\Omega(\sqrt{KT}+\sqrt{d\,\mathcal E\min\{1+\log J,\,T/d\}})$, up to constants.
\end{theorem}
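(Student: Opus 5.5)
The argument is a blocking construction in which the learner, even holding $m_t=c_{t-d}$, knows nothing about the current block's best direction. Partition $[T]$ into $N=\lfloor T/(2d)\rfloor$ blocks of length $2d$; since $d\le T/2$ there is at least one block. In each block only a window of $d$ rounds carries signal, and in those rounds the losses are perturbed by an amount $\varepsilon$. The stale vector $m_t$ then reveals only perturbations that are at least $d$ rounds old, and by the alignment of the windows these belong to earlier blocks, which are independent. State it concretely: give the $J$ drifting directions private states $s_1,\dots,s_J$, with action $j\le J$ mapped deterministically to $s_j$, and route the remaining actions to a reference state $s_0$ with $\theta_t(s_0)=1/2$. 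This uses $J\le\min\{K-1,|\Sset|-1\}$. In each block draw an index $j^\star_b$ uniformly from $[J]$ and set $\theta_t(s_{j^\star_b})=1/2-\varepsilon$ during the signal window; all other states stay at $1/2$. Because $\|c_t-m_t\|_\infty\le\varepsilon$ only on $O(d)$ rounds per block, we get $\etwo\le 2\varepsilon^2 dN\lesssim\varepsilon^2 T$. Choosing $\varepsilon=\sqrt{\mathcal E/T}\le1/2$ (valid since $\mathcal E\le T/4$) meets the budget. Yao's principle then converts the random instance into a fixed oblivious one.

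The core of the argument is a per-block regret bound. Within a block, the learner's information about $j^\star_b$ comes only from delayed outcomes $X_r$ with $r\le t-d-1$, which lie in earlier blocks, and from the states. The states are deterministic functions of the actions, so they are uninformative. Hence on the signal window the learner plays essentially blind with respect to $j^\star_b$, and its per-round expected loss exceeds the best action's by $\varepsilon(1-1/J)$. Summing gives regret $\Omega(\varepsilon dN)=\Omega(\varepsilon T)$ against the per-block best action. This is dynamic regret, however, and the theorem is about static regret. To get static regret one should instead reuse the same $j^\star$ across blocks with fresh random signs, in the style of the Dekel et al.\ delayed lower bound. Alternatively one can tune the number of blocks: set $\varepsilon=\sqrt{\mathcal E/(dN)}$ with $N$ blocks so that the comparator's advantage scales like $\varepsilon d\sqrt{N\log J}$ by the maximal inequality over $J$ Gaussian-like random walks. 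Optimizing, with $N\approx\min\{1+\log J,T/d\}$ effective blocks, gives $\sqrt{d\mathcal E\min\{1+\log J,T/d\}}$. Concretely, I would take per-block i.i.d.\ perturbations $\pm\varepsilon$ on each of the $J$ states, with $\varepsilon^2 dN\le\mathcal E$. The learner's expected loss is then $1/2$ times the length of the signal windows, because it cannot predict signs it has not yet seen. Meanwhile $\Ex\min_j\sum_b(\text{sign}_{b,j})\varepsilon d\approx-\varepsilon d\sqrt{N\log J}$ when $N\ge\log J$. When $N<\log J$ the maximum saturates at $N$, which is where the $\min\{\cdot,T/d\}$ comes from. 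Substituting $\varepsilon=\sqrt{\mathcal E/(dN)}$ gives $\sqrt{d\mathcal E\log J}$, or $\sqrt{d\mathcal E N}$ in the saturated case, and the lower tail of the maximum of $J$ random walks supplies the constant.

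The main obstacle is the information argument: showing that $m_t$ and the delayed outcomes carry no information about the current block's signs. The windows must be aligned so that $c_{t-d}$ always lies in a previous block, or in the inert first half of the current block where only $1/2$ appears. I would handle this by placing the signal in the second half of each length-$2d$ block. The $+1$ inside $1+\log J$ covers $J=1$, where a single random sign against the reference action already yields $\varepsilon d\sqrt N$. For the second claim, without the oracle, add the standard $\Omega(\sqrt{KT})$ adversarial bandit lower bound of \citet{auer2002}, which applies because $P$ can be the identity on $\min\{K,|\Sset|\}$ actions, or rather on $K$ actions with distinct state laws. Since the oracle can only help, the drift term also transfers to this setting. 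The maximum of the two bounds is at least half their sum.
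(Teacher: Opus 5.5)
Your final construction matches the paper's proof. It uses independent Rademacher signs on $J$ private states in every block and a reference state held at $\tfrac12$. The blocks are aligned so that neither delivered outcomes nor $m_t=c_{t-d}$ reach the current block's signs, so the learner pays $T/2$ in expectation while the comparator gains $\varepsilon d\,\Ex\max_j(\sum_b\sigma_b^{(j)})^{+}$. The paper differs only in using blocks of length exactly $d$ with just the first block neutral, rather than your length-$2d$ blocks with inert halves, and in proving the random-walk lower tail (Lemma~\ref{lem:rade} and the three-regime argument) that you cite.

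Two small fixes are needed. First, keep all $\Theta(T/d)$ blocks with $\varepsilon\asymp\sqrt{\mathcal E/T}$: using only $\min\{1+\log J,T/d\}$ blocks with $\varepsilon=\sqrt{\mathcal E/(dN)}$ can push $\varepsilon$ above $\tfrac12$ when $\mathcal E\gg d(1+\log J)$. Second, the $\sqrt{KT}$ term needs an identity-type $P$ with $|\Sset|\ge K$. With $P$ known and few states, merely distinct state laws do not force $\sqrt{KT}$ (compare Theorem~\ref{thm:sexp3}).
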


The first term is the usual cost of exploring. The second is the cost of delay that pooling cannot
remove: it grows with the
wait, how far the losses move during it, and the number $J$ of directions the best fixed action can
use. They come from different hard instances, so adding them overstates the truth by at most a
factor of two.

\section{Experiments}

Theorem~\ref{thm:m1} predicts that the learning cost follows the number of distinct action
behaviours
rather than the number of actions, so the families below are built where the two come apart.
Figure~\ref{fig:phase} separates them in both directions: the gain rises as the effective dimension
falls at fixed task difficulty, and the effective dimension stays nearly constant as the catalogue
grows.
With $K=200$ generated items and six levels of engagement the estimated dimension is $1.97$, and
\textsc{State-EXP3} cuts regret against action-level weighting by $79.3$, $63.6$ and $38.9$
per cent at $d=10,50,200$, and against the rate-optimal action-only
delayed-bandit baseline of \citet{zimmert2020} tuned over a grid by $67.9$, $51.4$ and $31.5$. That setting is
hand-structured, not fitted to data, so it shows the mechanism, not a deployed system
(Table~\ref{tab:funnel}).

\begin{table}[H]\centering\small
\caption{Funnel family, $K=200$, $|\Sset|=6$, $T=20000$, $8$ paired seeds; cuts are against
action-level weighting and against the tuned baseline (Experiment~2).}
\label{tab:funnel}
\begin{tabular}{@{}rrrrrr@{}}
\toprule
$d$ & action-level & tuned \citep{zimmert2020} & \textsc{State-EXP3} & cut vs action & cut vs tuned \\
\midrule
 10 & 4577 & 2959 &  949 & $79.3\%$ & $67.9\%$ \\
 50 & 4752 & 3557 & 1728 & $63.6\%$ & $51.4\%$ \\
200 & 5307 & 4735 & 3245 & $38.9\%$ & $31.5\%$ \\
\bottomrule
\end{tabular}
\end{table}

These runs use the algorithm Theorem~\ref{thm:m1} covers. A best-state rule leads seven of nine settings, but where its assumption fails
its regret grows linearly in $T$ while ours stays sublinear
(Appendix~\ref{app:num}).

\begin{figure}[t]
\centering
\includegraphics[width=\textwidth]{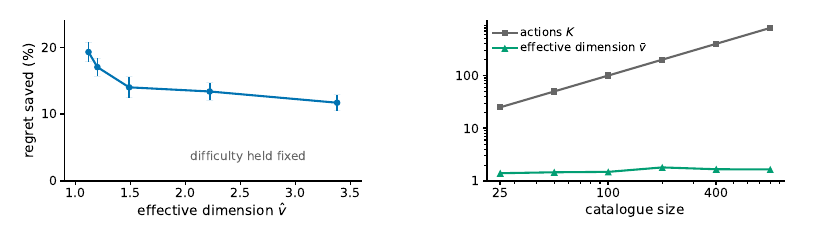}
\caption{\textbf{Effective dimension, not action count.} \textbf{Left,} regret saved against
action-level weighting as the effective dimension falls, with task difficulty held fixed so the
sweep is not confounded by easier instances; $K=40$, $|\Sset|=8$, $T=10000$, $d=50$, $20$ paired
seeds, bars one standard error of the paired difference (Experiment~1). \textbf{Right,} the catalogue
grows thirty-twofold while $\bar v$ stays between $1.4$ and $1.8$, the separation
Theorem~\ref{thm:m1} charges for (Experiment~2). $\vhat$ and $\bar v$ estimate $\sup_xv(x)$ by
Monte Carlo.}
\label{fig:phase}
\end{figure}

\section{Discussion}\label{sec:conj}

Two things stand between an action and its worth: how much its behaviour overlaps with other
actions, and how long the outcome takes to arrive. The first is exploitable: when $P$ is known, an
estimate of $\bar v$ well below $K$ indicates that delayed outcomes can be pooled across actions
and that the state channel is worth using. The second cannot be pooled away:
Theorem~\ref{thm:lower} shows that uncertainty from waiting persists even when the exact losses
from $d$ rounds ago are supplied for free. Thus, under state sufficiency,
intermediate states can remove the leading action-count cost of a large catalogue, but not the
temporal cost of stale information. Two limitations remain:
$\bar v$ is estimated rather than computed exactly, and our upper bound captures overlap while the
lower bound characterises drift on a separate axis. Adapting to both overlap and drift in one
algorithm and one guarantee remains open.

\appendix

\section{The state-pooled estimator}\label{app:pool}

\paragraph{The algorithm.} Inputs are the action-to-state matrix $P$, the delay $d$, the number
of copies $m$, and the rate $\eta>0$. Setting $m=1$ gives \textsc{State-EXP3}, the algorithm of the
body and of every experiment, covered by Theorem~\ref{thm:m1}; setting $m=d+1$ gives a distinct
algorithm, \textsc{Interleaved-State-EXP3}, which runs $d+1$ copies in rotation with copy
$i\in\{0,\dots,m-1\}$ taking every $m$th round, and is covered by Theorem~\ref{thm:sexp3}. Round
$r$'s outcome becomes usable at round $r+d+1$, which for $m=d+1$ is the next round belonging to the
copy that played round $r$.

\begin{enumerate}\itemsep1pt \parskip0pt \topsep3pt
\item Set $L_i\leftarrow(0,\dots,0)\in\mathbb R^K$ for $i=0,\dots,m-1$, and let $\mathcal P$ be
an empty table of rounds awaiting their outcome.
\item For $t=1,\dots,T$:
\begin{enumerate}\itemsep1pt \parskip0pt \topsep1pt
\item \emph{Deliver.} If $r\triangleq t-d-1\ge1$, read $(i_r,q_r(S_r),S_r,X_r)$ from
$\mathcal P$, remove it, and update only that copy,
$L_{i_r}(a)\leftarrow L_{i_r}(a)+P(S_r\mid a)X_r/q_r(S_r)$ for every $a$. This is the pooled
step, and it touches every action rather than the one played.
\item \emph{Select the copy.} $i\leftarrow t\bmod m$.
\item \emph{Play.} $x_t(a)\propto\exp(-\eta L_i(a))$, draw $A_t\sim x_t$, and observe $S_t$
at once.
\item \emph{Record.} Compute the scalar $q_t(S_t)=\sum_ax_t(a)P(S_t\mid a)$ and store
$(i,q_t(S_t),S_t,\cdot)$ in $\mathcal P$, to be completed when $X_t$ arrives at time $t+d$.
\end{enumerate}
\end{enumerate}

Steps (a), (c) and (d) are each $O(K)$ and there is no optimization subproblem, so a round costs
$O(K)$ once $P$ is stored. The state is held in the $m$ vectors $L_i$, which is $mK$ numbers, the
matrix $P$, which is $K|\Sset|$ numbers, and four scalars for each of the at most $d$ rounds in
flight. Only the scalar $q_r(S_r)$ is needed from round $r$, not the vector $x_r$.

\paragraph{A worked example.} Take three actions and two states, with
\[
P=\begin{pmatrix}0.8&0.2\\0.4&0.6\\0&1\end{pmatrix},
\qquad x_t=(\tfrac12,\tfrac14,\tfrac14),
\]
so $q_t(s_1)=\tfrac12(0.8)+\tfrac14(0.4)+\tfrac14(0)=0.5$ and $q_t(s_2)=0.5$. The third action
cannot produce $s_1$ at all. Suppose the learner draws $a_1$, observes $s_1$, and receives $X_t=1$
at time $t+d$. Then $\hat c_t(a)=P(s_1\mid a)/0.5$, so
$\hat c_t=(1.6,\,0.8,\,0)$ against the action-level $\tilde c_t=(2,0,0)$. One observation updates
all three actions: $a_2$ is charged although it was never played, because it had a $0.4$ chance of
producing the state seen, and $a_3$ is charged nothing, because it could not have produced that
state. None of this introduces bias: with $\theta_t=(1,0)$ the true losses are
$c_t=(0.8,0.4,0)$, and averaging $\hat c_t$ over $S_t\sim q_t$ returns exactly that. The gain is in the second moment:
$v_t=1.44$ here, against $K=3$ for the action-level estimate, and with $\theta_t=(1,1)$ both are
attained exactly.

\begin{remark}[The $\chi^2$ form]\label{rem:chi2}
Under $x_t$ the pair $(A_t,S_t)$ has law $x_t(a)P(s\mid a)$ with marginals $x_t$ and $q_t$, so
\begin{align*}
\chi^2\bigl(P_{A_t,S_t}\,\big\|\,P_{A_t}\!\otimes\!P_{S_t}\bigr)
&=\sum_{a,s}\frac{\bigl(x_t(a)P(s\mid a)-x_t(a)q_t(s)\bigr)^2}{x_t(a)q_t(s)}\\
&=\sum_{a,s}\frac{x_t(a)P(s\mid a)^2}{q_t(s)}-2+1\ =\ v_t-1,
\end{align*}
the cross term summing to $\sum_{a,s}x_t(a)P(s\mid a)=1$ and the last to
$\sum_{a,s}x_t(a)q_t(s)=1$. So $v_t=1$ exactly when $A_t$ and $S_t$ are independent, which pins the rows of $P$ only on the
support of $x_t$ and therefore means all rows agree whenever $x_t$ has full support, as
exponential weights does. If $P$ is deterministic and every state is reached by some action in
the support of $x_t$, then $v_t$ equals the number of states that support reaches. In particular,
under full-support play and an onto deterministic $P$, $v_t=|\Sset|$. This is an identity, not a bound; it is used for reading $v_t$, and once in the proof of
Theorem~\ref{thm:m1}.
\end{remark}

\begin{proposition}\label{prop:pool}
Fix $t$, let $x_t$ be the play distribution, $q_t(s)=\sum_ax_t(a)P(s\mid a)$ the induced state
distribution, and $\hat c_t(a)=P(S_t\mid a)X_t/q_t(S_t)$, which is well defined because only
states with $q_t(s)>0$ occur. For statements involving $1/x_t(a)$, restrict to actions with
$x_t(a)>0$; \textup{\textsc{State-EXP3}} itself has full support on every action. Then $\hat c_t(a)\ge0$, $\hat c_t(a)\le1/x_t(a)$,
$\Ex[\hat c_t(a)\mid\mathcal F_{t-1}]=c_t(a)$, and
\[
\sum_ax_t(a)\Ex[\hat c_t(a)^2\mid\mathcal F_{t-1}]\ \le\ v_t
:=\sum_{s:q_t(s)>0}\frac{\sum_ax_t(a)P(s\mid a)^2}{q_t(s)},\qquad 1\le v_t\le|\Sset| ,
\]
states of zero probability being omitted throughout, which is the convention $0/0=0$.
The action-level $\tilde c_t(a)=\mathbf 1\{A_t=a\}X_t/x_t(a)$ has the same first three properties
and weighted second moment at most $K$.
\end{proposition}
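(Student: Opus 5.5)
The plan is to condition on $\mathcal F_{t-1}$ throughout. Under this conditioning $x_t$ and $q_t$ are fixed, and $(A_t,S_t)$ has joint law $x_t(a')P(s\mid a')$, so $S_t$ has marginal $q_t$. I also use $\Ex[X_t\mid\mathcal F_{t-1},A_t,S_t]=\theta_t(S_t)$ and $X_t\in[0,1]$.

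\textbf{Nonnegativity and the upper bound.} Nonnegativity is immediate. For the upper bound, I compare numerator and denominator. On the observed state $s=S_t$ we have $q_t(s)\ge x_t(a)P(s\mid a)$, since $q_t(s)$ is a sum of nonnegative terms one of which is $x_t(a)P(s\mid a)$. Hence $\hat c_t(a)\le P(s\mid a)/q_t(s)\le 1/x_t(a)$. This also covers the case $P(s\mid a)=0$.

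\textbf{Unbiasedness.} I take the tower expectation over $S_t$ first:
\[
\Ex[\hat c_t(a)\mid\mathcal F_{t-1}]=\sum_{s:q_t(s)>0}q_t(s)\,\frac{P(s\mid a)\theta_t(s)}{q_t(s)}.
\]
States with $q_t(s)=0$ have $P(s\mid a)=0$ whenever $x_t(a)>0$. So the sum equals $\sum_sP(s\mid a)\theta_t(s)=c_t(a)$. This is the one place where the full-support restriction is needed: an action outside the support could put mass on an unreachable state.

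\textbf{Second moment.} I use $X_t^2\le1$ to get
\[
\Ex[\hat c_t(a)^2\mid\mathcal F_{t-1}]\le\sum_{s:q_t(s)>0}\frac{P(s\mid a)^2}{q_t(s)}.
\]
Multiplying by $x_t(a)$ and summing over $a$ gives exactly $v_t$.

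\textbf{Range of $v_t$.} For the upper bound, I use $P(s\mid a)^2\le P(s\mid a)$, which gives $\sum_ax_t(a)P(s\mid a)^2\le q_t(s)$. Each reachable state then contributes at most $1$, so $v_t\le|\Sset|$. For the lower bound, I apply Cauchy--Schwarz (Jensen) to each state:
\[
\sum_ax_t(a)P(s\mid a)^2\ \ge\ \frac{\Bigl(\sum_ax_t(a)P(s\mid a)\Bigr)^2}{\sum_ax_t(a)}=q_t(s)^2.
\]
Summing over states gives $v_t\ge\sum_sq_t(s)=1$. Equivalently, Remark~\ref{rem:chi2} gives $v_t-1$ as a $\chi^2$ divergence, which is nonnegative.

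\textbf{Action-level estimator.} This is the special case $S_t=A_t$ with $P$ the identity matrix. The general argument then gives the first three properties directly. For this $P$ each state $a$ has $q_t(a)=x_t(a)$, so $v_t$ equals the number of actions in the support, which is at most $K$.

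\textbf{Main obstacle.} There is no real obstacle. The only delicate points are bookkeeping:
\begin{itemize}
\item handling zero-probability states through the $0/0$ convention;
\item noting that unbiasedness requires $x_t(a)>0$;
\item making sure the conditioning on $\mathcal F_{t-1}$ fixes $x_t$, which holds because $x_t$ is even $\mathcal F_{t-d-1}$-measurable.
\end{itemize}
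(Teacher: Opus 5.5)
Your proof is correct and follows the paper's argument step for step: the range bound comes from $q_t(s)\ge x_t(a)P(s\mid a)$, unbiasedness from conditioning on $S_t$, the second moment from $X_t^2\le1$ and exchanging the sums, the range of $v_t$ from $P(s\mid a)^2\le P(s\mid a)$ and Jensen, and the action-level case from replacing the state by the played action (identity $P$, with $c_t$ in the role of $\theta_t$). Two of your remarks are small correct refinements that the paper's proof leaves implicit: unbiasedness of $\hat c_t(a)$ genuinely needs $x_t(a)>0$, since an unplayed action may put mass on a state with $q_t(s)=0$, and the action-level weighted second moment equals $|\mathrm{supp}(x_t)|\le K$.
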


\begin{proof}
Nonnegativity is immediate from $X_t\ge0$. Given $\mathcal F_{t-1}$ the state $S_t$ has
distribution $q_t$, and $\Ex[X_t\mid\mathcal F_{t-1},S_t=s]=\theta_t(s)$ by the model, since that
conditional mean does not depend on the action, so
$\Ex[\hat c_t(a)\mid\mathcal F_{t-1}]=\sum_sq_t(s)P(s\mid a)\theta_t(s)/q_t(s)=c_t(a)$. For the
range, $q_t(s)\ge x_t(a)P(s\mid a)$ for every $a$, so $P(s\mid a)/q_t(s)\le1/x_t(a)$ and
$X_t\le1$. For the second moment, $X_t^2\le1$ gives
$\Ex[\hat c_t(a)^2\mid\mathcal F_{t-1}]\le\sum_sP(s\mid a)^2/q_t(s)$, and exchanging the sums
gives $v_t$. Each summand of $v_t$ is at most $1$, since $P(s\mid a)\le1$ makes
$\sum_ax_t(a)P(s\mid a)^2\le q_t(s)$, and at least $q_t(s)$, since Jensen applied to
$a\mapsto P(s\mid a)$ under $x_t$ gives $\sum_ax_t(a)P(s\mid a)^2\ge q_t(s)^2$. Summing the two
bounds gives $1\le v_t\le|\Sset|$. The action-level claims are the same computation with the
state replaced by the played action, whose weighted second moment is $\sum_ax_t(a)/x_t(a)=K$.
\end{proof}

The two ends are attained. If every action induces the same state distribution then
$v_t=\sum_sq_t(s)=1$, and if $P$ is deterministic, each action reaching a single state, then
every $P(s\mid a)$ is $0$ or $1$ and each summand is exactly $1$, so $v_t$ counts the states
reached and equals $|\Sset|$ when every state has an action reaching it. Disjoint row supports
give $v_t=K$ rather than $|\Sset|$, since the summands over one action's private states add to
$\sum_sP(s\mid a)=1$, so a sensor with many private states per action resolves few of them. So $v_t$ reads the
overlap between the rows of $P$ rather than their length, and $\bar v=\sup_xv(x)$, the supremum
over play distributions, depends on $P$ alone. An outcome observed at a state updates the estimate
for every action that can reach it, which is why the construction of Section~\ref{sec:lower} gives
each drifting coordinate a private state, forcing $v_t=|\Sset|$ there. The immediately observed
pairs $(A_t,S_t)$ identify $P$ separately, though Theorem~\ref{thm:sexp3} assumes it known.

\section{Proofs: pooling and grouping}\label{app:etc}

Two results are stated here rather than in the body, since Theorem~\ref{thm:m1} covers the
algorithm actually run. The first is the interleaved variant.

\begin{theorem}\label{thm:sexp3}
Let $P$ be known and $(\theta_t)$ oblivious. Pick before the run any fixed number $V$ with
$\sum_tv_t\le V$ always. Then \textup{\textsc{Interleaved-State-EXP3}} at the matching learning
rate satisfies $\Ex[R_T]\le\sqrt{2(d+1)V\log K}$. Writing $\bar v$ for the largest $v_t$ over all
ways of playing, $V=\bar vT\le|\Sset|T$ is one such choice, and it depends on $P$ alone.
\end{theorem}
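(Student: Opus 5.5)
The plan is the standard interleaving reduction: split the run into $m=d+1$ independent non-delayed \textsc{EXP3} instances, each driven by the pooled estimator of Proposition~\ref{prop:pool}, and combine their regrets by Cauchy--Schwarz.

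Index copy $i\in\{0,\dots,d\}$ by the rounds $\mathcal T_i=\{t\le T: t\bmod (d+1)=i\}$. By the algorithm's bookkeeping, the outcome of round $r\in\mathcal T_i$ becomes usable at round $r+d+1$, which is the next round of $\mathcal T_i$. So, seen from inside copy $i$, every estimate $\hat c_r$ has been added to $L_i$ before that copy plays again. Each copy is therefore an ordinary undelayed \textsc{EXP3} run on the sequence $(\hat c_t)_{t\in\mathcal T_i}$. Moreover $x_t$ for $t\in\mathcal T_i$ is measurable with respect to the history before $t$. Hence Proposition~\ref{prop:pool} applies round by round: $\hat c_t\ge0$, $\Ex[\hat c_t\mid\mathcal F_{t-1}]=c_t$, and the weighted second moment is at most $v_t$.

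Next, apply the textbook nonnegative-loss \textsc{EXP3} bound inside each copy. Since $\hat c_t\ge0$, the inequality $e^{-y}\le1-y+y^2/2$ for $y\ge0$ gives, for any fixed comparator $a$,
\[
\sum_{t\in\mathcal T_i}\langle x_t-e_a,\hat c_t\rangle\le\frac{\log K}{\eta}+\frac\eta2\sum_{t\in\mathcal T_i}\sum_b x_t(b)\hat c_t(b)^2 .
\]
No bound on $\eta\hat c_t$ is needed here. Take expectations using the tower property: $x_t$ is predictable, the estimator is unbiased, and $\theta$ is oblivious, so the comparator $a$ may be fixed in advance. The left side becomes the copy's pseudo-regret against $a$, and the right side is at most $\log K/\eta+\frac\eta2\Ex\sum_{t\in\mathcal T_i}v_t$.

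Summing over the $d+1$ copies, with the same $a$ and the same $\eta$, gives
\[
\Ex[R_T]\le\frac{(d+1)\log K}{\eta}+\frac\eta2\,\Ex\sum_t v_t\le\frac{(d+1)\log K}{\eta}+\frac{\eta V}{2},
\]
because $\sum_tv_t\le V$ holds on every path. The matching rate $\eta=\sqrt{2(d+1)\log K/V}$ then yields $\sqrt{2(d+1)V\log K}$.

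The choice $V=\bar vT$ is admissible since $v_t=v(x_t)\le\sup_x v(x)=\bar v$. The inequality $\bar v\le|\Sset|$ follows from Proposition~\ref{prop:pool}. The quantity $\bar v$ depends on $P$ alone because the supremum ranges over all play distributions.

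The main obstacle is the measurability bookkeeping. We must check that the copy partition makes $x_t$ depend only on outcomes already delivered, i.e.\ rounds $\le t-d-1$ in the same residue class. We must also check that the conditional expectations in Proposition~\ref{prop:pool} are taken with respect to a filtration for which $x_t$ is predictable, while $X_t$ of other copies may still be in flight. Since $\Ex[\hat c_t\mid\mathcal F_{t-1}]$ conditions on the full environment history, including undelivered outcomes, and $x_t$ is $\mathcal F_{t-1}$-measurable, this goes through. I would spell it out carefully rather than treat it as routine.
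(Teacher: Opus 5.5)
Your proposal is correct and matches the paper's proof of Theorem~\ref{thm:sexp3} essentially step for step. The shared steps are: undelayed feedback within each of the $m=d+1$ copies; the nonnegative-loss exponential-weights inequality via $e^{-z}\le1-z+z^2/2$; unbiasedness and the $v_t$ second-moment bound of Proposition~\ref{prop:pool}, using that $x_t$ is $\mathcal F_{t-1}$-measurable; summation over copies at a common $\eta$; and $\eta=\sqrt{2(d+1)\log K/V}$.

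Fixing one comparator in advance, which obliviousness allows, is equivalent to the paper's step $\sum_i\min_a\sum_{t\in i}c_t(a)\le\min_a\sum_tc_t(a)$. Your argument never uses the Cauchy--Schwarz combination announced in your first sentence, and it should not: tuning each copy separately would need per-copy budgets that the hypothesis does not supply.
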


The second merges states that are not worth telling apart. Group them with a map $g$ from $\Sset$
onto a smaller set $\mathcal Z$ and run the same algorithm on $g(S_t)$, where
$P^g(z\mid a)=\sum_{s:g(s)=z}P(s\mid a)$. Grouping never raises the dimension, but the estimate now
tracks a group average, so the widest spread of losses inside a group,
$\delta_t(g)=\max_z\max_{s,s':g(s)=g(s')=z}|\theta_t(s)-\theta_t(s')|$, is the error introduced.

\begin{theorem}\label{thm:rep}
Under the hypotheses of Theorem~\ref{thm:sexp3}, running on $g$ with $\sum_tv_t(g)\le V(g)$ gives
$\Ex[R_T]\le\sqrt{2(d+1)V(g)\log K}+2\sum_t\delta_t(g)$.
\end{theorem}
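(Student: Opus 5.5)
We reduce Theorem~\ref{thm:rep} to Theorem~\ref{thm:sexp3} run on the grouped chain $A_t\to Z_t=g(S_t)\to X_t$, plus a bias term for the loss of state sufficiency. The grouped chain has action-to-state matrix $P^g$, known because $P$ and $g$ are. Its state-loss mean is
\[
\theta^g_t(z)=\sum_{s:g(s)=z}P(s\mid a)\theta_t(s)/P^g(z\mid a),
\]
and this still depends on $a$. So the grouped instance is not exactly of the form required by Theorem~\ref{thm:sexp3}, and the plan is to compare it with one that is.

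Fix a representative loss $\bar\theta_t(z)$ for each group, say the midpoint of $\{\theta_t(s):g(s)=z\}$. Then
\[
|\theta^g_t(z\mid a)-\bar\theta_t(z)|\le\delta_t(g)
\]
for every $a$ and $z$, since $\theta^g_t(z\mid a)$ is a convex combination of values lying in an interval of width $\delta_t(g)$.

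Define the surrogate action losses $\bar c_t=P^g\bar\theta_t$. Two facts about them:

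\begin{itemize}
\item The true action losses satisfy $c_t(a)=\sum_zP^g(z\mid a)\theta^g_t(z\mid a)$, so $\|c_t-\bar c_t\|_\infty\le\delta_t(g)$.
\item The algorithm's grouped estimator $\hat c^g_t(a)=P^g(Z_t\mid a)X_t/q^g_t(Z_t)$ is nonnegative. It satisfies $\langle x_t,\hat c^g_t\rangle=X_t\le1$, and its weighted second moment is at most $v_t(g)$ by Proposition~\ref{prop:pool}, applied with $P^g$ in place of $P$. That proof used only $X_t\le1$, not unbiasedness.
\end{itemize}

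The one thing that fails is unbiasedness. Here $\Ex[\hat c^g_t(a)\mid\mathcal F_{t-1}]=\sum_zP^g(z\mid a)\,\mu_t(z)$ with $\mu_t(z)=\Ex[X_t\mid Z_t=z]$, a mixture over actions under $x_t$. This is again within $\delta_t(g)$ of $\bar\theta_t(z)$, so each estimator has conditional mean $\tilde c_t$ with $\|\tilde c_t-\bar c_t\|_\infty\le\delta_t(g)$. Combining with the first fact gives $\|\tilde c_t-c_t\|_\infty\le 2\delta_t(g)$.

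Next, run the proof of Theorem~\ref{thm:sexp3} verbatim against the sequence $\tilde c_t$. That proof is the interleaved EXP3 analysis: each of the $d+1$ copies sees an undelayed EXP3 instance on its own rounds, the standard exponential-weights bound is applied to each copy, and Cauchy--Schwarz is used across copies. Its only inputs are nonnegativity, the second-moment bound $v_t(g)$, and that the estimators are conditionally unbiased for some oblivious-looking target. The delicate point is that $\tilde c_t$ depends on $x_t$ through $\mu_t$, hence is not oblivious. I would check that the EXP3 potential argument only needs $\Ex[\hat c^g_t\mid\mathcal F_{t-1}]=\tilde c_t$ with $\tilde c_t$ predictable, and that the comparator is one fixed action $a^\star$. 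Both hold, so the bound
\[
\Ex\Bigl[\sum_t\langle x_t-e_{a^\star},\tilde c_t\rangle\Bigr]\le\sqrt{2(d+1)V(g)\log K}
\]
goes through.

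Finally, convert to the true regret. Write
\[
\langle x_t-e_{a^\star},c_t\rangle=\langle x_t-e_{a^\star},\tilde c_t\rangle+\langle x_t-e_{a^\star},c_t-\tilde c_t\rangle .
\]
Since $x_t$ and $e_{a^\star}$ are both probability vectors, the last term is at most the oscillation of $c_t-\tilde c_t$, which is at most $2\|c_t-\tilde c_t\|_\infty$. Used naively, this gives $4\delta_t(g)$. To get the stated constant $2$, I would compare to the surrogate $\bar c_t$ in both places instead of bounding through $\|c_t-\tilde c_t\|_\infty$:
\begin{itemize}
\item the term $\langle x_t,\cdot\rangle$ contributes $\langle x_t,c_t-\tilde c_t\rangle$;
\item the comparator contributes $\tilde c_t(a^\star)-c_t(a^\star)$.
\end{itemize}
Each is bounded by one $\delta_t(g)$, using the tighter one-sided facts that $\langle x_t,\tilde c_t\rangle=\Ex[X_t]=\langle x_t,c_t\rangle$ exactly (the played distribution is unbiased on average) and $|\tilde c_t(a^\star)-c_t(a^\star)|\le 2\delta_t(g)$. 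In fact the exact identity $\langle x_t,\tilde c_t\rangle=\langle x_t,c_t\rangle$ kills the first term, leaving only the comparator's $2\delta_t(g)$.

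Summing over $t$ yields $\sqrt{2(d+1)V(g)\log K}+2\sum_t\delta_t(g)$. The main obstacle is the non-oblivious target $\tilde c_t$; I expect predictability to suffice.
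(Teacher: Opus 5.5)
Your argument is correct and uses the same reduction as the paper. Your $\tilde c_t(a)=\sum_zP^g(z\mid a)\mu_t(z)$ is exactly the paper's surrogate $c^g_t$: your $\mu_t(z)$ is its group mean $\bar\theta_t(z)$ under $q_t$, not your midpoint. As in the paper, the one thing to check is that the interleaved analysis needs this target only to be predictable, not oblivious.

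The difference is in the bias step. The paper compares $\tilde c_t(a)$ with $c_t(a)$ directly. For each $z$, $\mu_t(z)$ and $\theta^g_t(z\mid a)$ are both averages of $\theta_t$ over the same group, so they lie in one interval of length at most $\delta_t(g)$. This gives $|\tilde c_t(a)-c_t(a)|\le\delta_t(g)$ for every $a$, which the paper pays once on $\langle x_t,\cdot\rangle$ and once on the comparator.

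You instead use $\langle x_t,\tilde c_t\rangle=\Ex[X_t\mid\mathcal F_{t-1}]=\langle x_t,c_t\rangle$. This is the identity $\langle x_t,\hat c^g_t\rangle=X_t$ that drives Theorem~\ref{thm:m1}, and the paper does not exploit it here. With it, the learner's term is free and only $\tilde c_t(a^\star)-c_t(a^\star)$ is paid.

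What inflates that remaining term to $2\delta_t(g)$ is your midpoint surrogate $\bar c_t$, with the bound $\delta_t(g)$ on each leg rather than the half-width. Replace it by the paper's direct comparison, and your cancellation yields
$\Ex[R_T]\le\sqrt{2(d+1)V(g)\log K}+\sum_t\delta_t(g)$, half the stated bias term. The paper's bound is a uniform per-action comparison that would apply to any surrogate within $\delta_t(g)$ of $c_t$ in sup norm. Your route uses the extra structure that the surrogate is exactly correct on average under the played distribution, and that is what saves the factor two.

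Two small points of presentation:
\begin{enumerate}
\item You say each of the two terms is bounded by one $\delta_t(g)$, but your own estimates give only $2\delta_t(g)$ for the comparator. Your final tally is nonetheless consistent.
\item The proof of Theorem~\ref{thm:sexp3} sums the $d+1$ potential inequalities at one common learning rate. It does not apply Cauchy--Schwarz across separately tuned copies, and the common rate is what lets a single total budget $V(g)$ suffice.
\end{enumerate}
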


Merging need not preserve state-sufficiency, so the grouped estimate is right not for $c_t$ but for
a stand-in loss differing from it by at most $\delta_t(g)$ on every action. Keeping states apart
preserves differences that matter but learns more slowly; merging shares more but pays
$2\sum_t\delta_t(g)$. The best grouping is the true one: with $16$ states from four hidden groups,
sizes $1,2,4,8,16$ give regret $1590$, $831$, $754$, $920$, $1080$, lowest at four, which no
algorithm was told. The map $g$ is fixed in advance; learning it from data remains open.

\textbf{Proof idea.} The pooled estimator is unbiased for every action, while
Proposition~\ref{prop:pool} bounds its play-weighted second moment by $v_t$ rather than $K$. With
$m=d+1$ interleaved copies, the outcome generated on a round belonging to one copy is available
before that copy acts again, so each copy is an ordinary exponential-weights process with
immediate feedback. Summing the $d+1$ potential inequalities gives $(d+1)\log K/\eta$, and the
pooled second moments give $\tfrac\eta2\sum_tv_t$.

\begin{proof}[Proof of Theorem~\ref{thm:sexp3}]
Fix a copy $i$ and let $T_i$ be the number of rounds it owns, so $\sum_iT_i=T$. Round $t$'s
outcome is usable from round $t+d+1$, which with $m=d+1$ is exactly $t+m$, the copy's next round,
so within a copy the feedback is undelayed and $L_i$ carries exactly the estimates of that copy's
earlier rounds. Exponential weights on nonnegative losses give, for any action $a^\star$ and any
$\eta>0$,
\[
\sum_{t\in i}\bigl(\langle x_t,\hat c_t\rangle-\hat c_t(a^\star)\bigr)
\le\frac{\log K}{\eta}+\frac{\eta}{2}\sum_{t\in i}\sum_ax_t(a)\hat c_t(a)^2 ,
\]
by the usual potential telescoping with $e^{-z}\le1-z+z^2/2$, which needs $z\ge0$ and therefore
$\hat c_t\ge0$, supplied by Proposition~\ref{prop:pool}. Each $x_t$ is
$\mathcal F_{t-1}$-measurable, because $L_i$ then holds only rounds $r\le t-m=t-d-1$, so taking
expectations turns the left side into $\sum_{t\in i}(\langle x_t,c_t\rangle-c_t(a^\star))$ by
unbiasedness and the quadratic term into at most $\Ex[\sum_{t\in i}v_t]$, again by
Proposition~\ref{prop:pool}. Summing over the $m=d+1$ copies and using
$\sum_i\min_a\sum_{t\in i}c_t(a)\le\min_a\sum_tc_t(a)$, which is what lets the copies be compared
against one common action, gives $\Ex[R_T]\le(d+1)\log K/\eta+\tfrac\eta2\Ex[\sum_tv_t]$. If
$\sum_tv_t\le V$ surely then $\eta=\sqrt{2(d+1)\log K/V}$ balances the two terms and gives
$\sqrt{2(d+1)V\log K}$, and $V=\bar vT$ is admissible by the definition of $\bar v$.
\end{proof}

The delay enters as a factor because the copies do not share their estimates, each paying
$\log K/\eta$ over $T/(d+1)$ rounds. Sharing them gives the $m=1$ algorithm of
Theorem~\ref{thm:m1}, whose additive delay term avoids this penalty.

\begin{proof}[Proof of Theorem~\ref{thm:rep}]
Running the algorithm on $g$ is running it on the sensor $(\mathcal Z,P^g)$. The surrogate
losses below depend on $x_t$ and so are predictable rather than oblivious, but the proof of
Theorem~\ref{thm:sexp3} uses obliviousness nowhere, only that each $c^g_t$ is
$\mathcal F_{t-1}$-measurable given the play, so it applies unchanged and bounds regret measured
in the surrogate losses
$c^g_t(a)=\sum_zP^g(z\mid a)\bar\theta_t(z)$, where
$\bar\theta_t(z)=\sum_{s:g(s)=z}q_t(s)\theta_t(s)/q_t^g(z)$ is the group mean the grouped
estimator is unbiased for, by the same computation as in Proposition~\ref{prop:pool} with
$S_t$ replaced by $g(S_t)$. Fix $a$ and $z$. Both $\{q_t(s)/q_t^g(z)\}$ and
$\{P(s\mid a)/P^g(z\mid a)\}$ are probability vectors on $\{s:g(s)=z\}$, so their
$\theta_t$-averages lie in a common interval of length $\delta_t(g)$ and differ by at most
$\delta_t(g)$. Weighting by $P^g(z\mid a)$ and summing over $z$ gives
$|c^g_t(a)-c_t(a)|\le\delta_t(g)$ for every $a$, hence
$\langle x_t,c_t\rangle-c_t(a^\star)\le\langle x_t,c^g_t\rangle-c^g_t(a^\star)+2\delta_t(g)$.
Summing over $t$ adds $2\sum_t\delta_t(g)$. Merging leaves $V(g)$ no larger than a bound valid for
the raw states: it suffices to merge two states, the general case following by induction on the
merges. Writing $P_1,P_2$ for the two columns, $B_j=\sum_ax_t(a)P_j(a)$ and
$\lambda=B_1/(B_1+B_2)$, Cauchy--Schwarz gives
$(P_1+P_2)^2\le\lambda^{-1}P_1^2+(1-\lambda)^{-1}P_2^2$ pointwise in $a$, so after dividing by the
merged denominator the merged summand is at most the two it replaces.
\end{proof}

\section{Relation to other inaccuracy measures}\label{app:transfer}

\paragraph{Neighbouring settings.} \citet{vernade2020} vary the action-to-state map while fixing
the state-to-loss map, where we do the reverse. \citet{wang2026} and \citet{zhang2026impatient}
measure a prediction error as $\etwo$ does, without the delay. Feedback graphs
\citep{alon2015,esposito2022,gabbianelli2023} reveal other actions' losses outright, where here the
state's loss must still be estimated from the one outcome that arrives.

Write $\lamtwo=\sum_t\min\{1,\|c_t-m_t\|_2^2\}$ for the inaccuracy measure of \citet{baron2025}.
Unlike $\etwo$, which charges only the worst action in a round, $\lamtwo$ grows with how many
actions are predicted wrongly, and since $\|v\|_\infty\le\|v\|_2\le\sqrt J\|v\|_\infty$ for a
vector $v$ with $J$ nonzero coordinates,
\[
\etwo\ \le\ \lamtwo\ \le\ J\,\etwo,
\]
with both ends attained: the left when a single action is mispredicted, the right when all $J$ are
mispredicted equally. So a bound in $\etwo$ is never weaker and can be $J$ times stronger, which is
why Theorem~\ref{thm:lower} is stated in $\etwo$. The two measures agree exactly when the
prediction error is concentrated on one action.

\section{Proofs: the lower bound}\label{app:proofs}

The measure $\etwo$ is controlled by the state-loss variation $W$, and no converse bound
holds in general.

\begin{lemma}[Delay window]\label{lem:window}
$\etwo\le d^2W$ for $W=\sum_t\|\theta_t-\theta_{t-1}\|_\infty^2$. The ratio $\etwo/(d^2W)$ tends to one when $\theta$ drifts monotonically in one coordinate at a
constant rate and $T/d\to\infty$, the first
$d$ rounds contributing $\sum_{j<d}j^2$ rather than $d^2$ each under the convention $m_t=c_1$.
\end{lemma}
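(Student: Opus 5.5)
The approach is to write the staleness error as a telescoping sum of state-loss increments over the delay window, bound its sup-norm by the triangle inequality, and then square with Cauchy--Schwarz. For $t>d$, $c_t-m_t=P(\theta_t-\theta_{t-d})=\sum_{j=t-d+1}^{t}P(\theta_j-\theta_{j-1})$. Since $P$ is row-stochastic, $\|Pu\|_\infty\le\|u\|_\infty$ for every $u$, so
\[
\|c_t-m_t\|_\infty\le\sum_{j=t-d+1}^{t}\|\theta_j-\theta_{j-1}\|_\infty .
\]
Cauchy--Schwarz over the $d$ terms then gives $\|c_t-m_t\|_\infty^2\le d\sum_{j=t-d+1}^{t}\|\theta_j-\theta_{j-1}\|_\infty^2$.

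For $t\le d$ the convention $m_t=c_1$ gives $c_t-m_t=P(\theta_t-\theta_1)$. This is a telescoping sum of only $t-1\le d-1$ increments, so the same argument bounds it by $(t-1)\sum_{j=2}^{t}\|\theta_j-\theta_{j-1}\|_\infty^2$. I take $\theta_0=\theta_1$ so that $W$ is well defined.

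Next I sum over $t$ and exchange the order of summation. Each increment index $j$ lies in at most $d$ of the windows $\{t-d+1,\dots,t\}$, and each occurrence carries a prefactor of at most $d$. Hence $\etwo\le d\cdot d\cdot W=d^2W$. This counting step is the only place where care is needed: the windows must be truncated correctly at both ends, $t\le d$ and $t$ near $T$, and each $j$ must be checked to be counted at most $d$ times.

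For the tightness claim, take $\theta_t(s_1)=\epsilon t$ on one coordinate, with the rest constant and $\epsilon T\le1$ so that losses stay in $[0,1]$. Choose $P$ with some action deterministically reaching $s_1$, so that $\|P(\theta_t-\theta_{t-d})\|_\infty=d\epsilon$ exactly. Then $W=(T-1)\epsilon^2$, while $\etwo=\sum_{j<d}j^2\epsilon^2+(T-d)d^2\epsilon^2$. The ratio $\etwo/(d^2W)$ is therefore $\bigl[(T-d)+O(d)\bigr]/(T-1)$, which tends to $1$ as $T/d\to\infty$.

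For the remark that no converse holds, a rapidly oscillating $\theta$ with period dividing $d$ suffices. Then $c_t=m_t$ for $t>d$, so $\etwo$ is small while $W$ grows linearly in $T$.
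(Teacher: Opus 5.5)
Your proof is correct and follows the paper's argument step for step: you telescope $c_t-m_t$ into $P$ applied to the increments in the delay window, use that a row-stochastic $P$ does not increase the sup-norm, apply Cauchy--Schwarz over the at most $d$ summands, and count each increment in at most $d$ windows; for the ratio you use the same constant-rate, single-coordinate drift with some action reaching that coordinate deterministically. The differences are cosmetic: you treat $t\le d$ explicitly where the paper reads $c_r$ as $c_1$ for $r\le0$, and for your no-converse aside the paper instead takes drift in the kernel of $P$, which works for every $d$, whereas a nonconstant oscillation with period dividing $d$ needs $d\ge2$.
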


In words, accumulated squared prediction error is at most the state-loss variation inflated by
the square of the delay, and no better bound in $W$ alone is available.

\begin{proof}[Proof of Lemma~\ref{lem:window}]
Throughout write $\Delta_j=\theta_j-\theta_{j-1}$, set $\theta_0=\theta_1$, so $\Delta_1=0$, and read
$c_r$ as $c_1$ for $r\le0$, matching the convention $m_t=c_1$ for $t\le d$ of
Section~\ref{sec:setting}. Then $c_t-c_{t-d}=P\sum_{j=t-d+1}^t\Delta_j$. Because the
rows of $P$ are probability vectors, averaging cannot increase the supremum norm, so
$\|c_t-c_{t-d}\|_\infty\le\sum_j\|\Delta_j\|_\infty$. By Cauchy--Schwarz applied to the $d$
summands, $\|c_t-c_{t-d}\|_\infty^2\le d\sum_{j=t-d+1}^t\|\Delta_j\|_\infty^2$. Summing over $t$
and using that each increment belongs to at most $d$ windows gives $\etwo\le d^2W$. The ratio $\etwo/(d^2W)$ approaches one when every increment has the same magnitude, sign and
active coordinate, and some action reaches that coordinate deterministically, since then no
cancellation occurs within a full window and $P$ passes the increment through undiminished.
Without the second condition the drift can lie in the kernel of $P$ and the ratio can be zero.
\end{proof}

The construction in full. Take states $s_0,\dots,s_J$, a stationary deterministic map sending
$a_j$ to $s_j$ for $j\le J$ and every other action to $s_0$, and independent Rademacher signs
$\sigma_b^{(j)}$ for $2\le b\le B=\lfloor T/h\rfloor$. Set $\theta_t(s_0)=\tfrac12$ throughout,
$\theta_t\equiv\tfrac12$ on block~$1$, and $\theta_t(s_j)=\tfrac12-\varepsilon\sigma_b^{(j)}$ on
each later block $b$. The neutral first block makes $c_1=\tfrac12\mathbf 1$, so the
convention $m_t=c_1$ for $t\le d$ of Section~\ref{sec:setting} supplies a constant over exactly
the rounds that precede any usable feedback, which is what the isolation step below needs at $b=1$. Rounds after the last complete block carry no drift, costing a constant factor.

\begin{lemma}[Rademacher lower tail]\label{lem:rade}
Let $S_B$ be a sum of $B\ge32$ independent Rademacher signs. For every $1\le u\le\sqrt B/32$,
$\Pr(S_B\ge u\sqrt B)\ge\tfrac u4e^{-64u^2}$.
\end{lemma}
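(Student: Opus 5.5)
The plan is a direct count on the binomial distribution. Write $S_B=2N-B$ with $N\sim\mathrm{Bin}(B,\tfrac12)$. The event $S_B\ge u\sqrt B$ then contains the event that $N=\tfrac B2+j$ for some integer $j$ in the window $I=[\tfrac u2\sqrt B,\,u\sqrt B]$. So I will lower-bound
\[
\Pr(S_B\ge u\sqrt B)\ \ge\ \sum_{j\in I\cap(\mathbb Z-\tfrac B2)}\binom{B}{B/2+j}2^{-B},
\]
taking $j$ integer or half-integer according to the parity of $B$. Since $u\sqrt B\ge\sqrt{32}>5$, the window has length at least $\tfrac u2\sqrt B\ge2.8$. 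It therefore contains at least $\tfrac u2\sqrt B-1\ge\tfrac u4\sqrt B$ admissible values of $j$.

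The key step is a pointwise lower bound on each term that keeps the correct polynomial factor $B^{-1/2}$. The crude bound $\binom Bk\ge e^{BH(k/B)}/(B+1)$ would lose a factor $\sqrt B$ and could not be recovered by the count $u\sqrt B$. Instead I will use the Stirling--Robbins form
\[
\binom{B}{k}\ \ge\ \sqrt{\frac{B}{8k(B-k)}}\;e^{BH(k/B)} .
\]
With $k=\tfrac B2+j$ we have $k(B-k)\le B^2/4$, so this gives $\binom Bk2^{-B}\ge(2B)^{-1/2}e^{-B\,\mathrm{KL}(k/B\,\|\,1/2)}$. Setting $x=j/B$, we have $|x|\le u/\sqrt B\le 1/32$. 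I then bound $\mathrm{KL}(\tfrac12+x\,\|\,\tfrac12)\le\chi^2=4x^2$, which gives $B\,\mathrm{KL}\le 4j^2/B\le 4u^2$. Hence every term in the window is at least $(2B)^{-1/2}e^{-4u^2}$.

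Multiplying the count by the per-term bound gives
\[
\Pr(S_B\ge u\sqrt B)\ \ge\ \frac u4\sqrt B\cdot\frac{e^{-4u^2}}{\sqrt{2B}}=\frac{u}{4\sqrt2}\,e^{-4u^2}.
\]
Because $u\ge1$, the factor $e^{-60u^2}\le e^{-60}$ is far smaller than $1/\sqrt2$, so this is at least $\tfrac u4e^{-64u^2}$. The generous constant $64$ absorbs all the slack, including any rounding in the count and the parity adjustment.

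The main obstacle is the first part of the key step: using a binomial-coefficient lower bound with the sharp $\sqrt{B}$ prefactor, and checking that it applies throughout the window. That needs $k$ and $B-k$ bounded away from $0$, which follows from $|x|\le1/32$. The hypotheses $B\ge32$ and $u\le\sqrt B/32$ serve exactly these purposes: they make the window nonempty and keep $k/B$ near $\tfrac12$. If the Robbins form turns out awkward at small $B$, I would fall back on comparing consecutive ratios $\binom{B}{k+1}/\binom Bk$ against the central coefficient $\binom{B}{\lfloor B/2\rfloor}2^{-B}\ge(2B)^{-1/2}$. Each of the at most $u\sqrt B$ ratio steps costs a factor at least $1-4j/B$, and their product is again at least $e^{-O(u^2)}$.
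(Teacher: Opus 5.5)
Your proof is correct, and its outer structure matches the paper's. Both pass to $N\sim\mathrm{Bin}(B,\tfrac12)$, sum over a window of order $u\sqrt B$ lattice points lying roughly $u\sqrt B/2$ to $u\sqrt B$ above the centre, and bound each term below by $B^{-1/2}e^{-O(u^2)}$.

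Where you differ is the per-term bound. The paper uses what you list as your fallback. It telescopes the ratios $(B-n-j+1)/(n+j)$ down to the central mass $\Pr(X=n)\ge1/(2\sqrt B)$. Using $\log(1-x)\ge-2x$ for $x\le\tfrac12$, it gets $\Pr(X=n+j)\ge\Pr(X=n)e^{-16j^2/B}$ for $j\le B/16$. It then sums over $j\in\{t,\dots,2t-1\}$ with $t=\lceil u\sqrt B/2\rceil+1$. The restriction $j\le B/16$ is exactly where the hypothesis $u\le\sqrt B/32$ and the constant $64$ come from.

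You instead quote the Robbins-type bound $\binom Bk\ge\sqrt{B/(8k(B-k))}\,e^{BH(k/B)}$ and combine it with $\mathrm{KL}\le\chi^2$. This yields the stronger $\tfrac{u}{4\sqrt2}e^{-4u^2}$, and it uses the upper limit on $u$ only to keep $k\le B-1$.

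So your route buys a better exponent and a wider admissible range of $u$, which would sharpen the constants in the maximal inequality built on this lemma. The price is importing a refined Stirling estimate. The paper's route is fully elementary, needing only the central binomial estimate and one logarithm inequality. The Robbins-type bound holds for every $1\le k\le B-1$, so the small-$B$ concern behind your fallback does not arise.
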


\begin{proof}
Write $S_B=2X-B$ with $X\sim\mathrm{Bin}(B,\tfrac12)$, so the event is
$\{X\ge B/2+u\sqrt B/2\}$; passing to $X$ removes the lattice gap, since $X$ takes every integer
value in $[0,B]$ whereas $S_B$ takes only those of the parity of $B$. Put $n=\lfloor B/2\rfloor$
and $t=\lceil u\sqrt B/2\rceil+1$, so $X\ge n+t$ implies the event and $t\le u\sqrt B$.
Consecutive binomial probabilities have ratio $(B-n-j+1)/(n+j)$, whose distance from one is at
most $8j/B$; for $j\le B/16$ that is at most $\tfrac12$, where $\log(1-x)\ge-2x$ holds, so summing
over $i\le j$ gives $\Pr(X=n+j)\ge\Pr(X=n)e^{-16j^2/B}$ with $\Pr(X=n)\ge1/(2\sqrt B)$. Summing
over the $t$ integers $j\in\{t,\dots,2t-1\}$, all at most $B/16$ because $u\le\sqrt B/32$, and
using $t\ge u\sqrt B/2$, gives
$\Pr(X\ge n+t)\ge t\,e^{-64u^2}/(2\sqrt B)\ge\tfrac u4e^{-64u^2}$.
\end{proof}

Three regimes give the maximal inequality the lower bound needs. Each $S_B^{(j)}$ is sub-Gaussian
with variance proxy $B$, so $\Ex\max_j(S_B^{(j)})^{+}\le\sqrt{2B\log J}+\sqrt B$ throughout. For
the matching lower bound, first suppose $\log J\le B/8$ and $\log J\ge128$. Taking
$u=\sqrt{\log J/128}$, which then lies in $[1,\sqrt B/32]$, makes $e^{-64u^2}=J^{-1/2}$, so
$\Pr(S_B^{(j)}\ge u\sqrt B)\ge\tfrac u4J^{-1/2}$ and independence across $j$ gives
$\Pr(\max_jS_B^{(j)}<u\sqrt B)\le e^{-u\sqrt J/4}$, at most $\tfrac12$ beyond an absolute $J$.
Hence $\Ex\max_j(S_B^{(j)})^{+}\ge\tfrac12u\sqrt B=\Omega(\sqrt{B\log J})$. Second, if
$\log J>B/8$ and $B\ge1024$, take $u=\sqrt B/32$, so $u\sqrt B=B/32$ and $e^{-64u^2}=e^{-B/16}$,
and $J\ge e^{B/8}$ makes $J\,e^{-B/16}$ diverge, giving
$\Ex\max_j(S_B^{(j)})^{+}=\Omega(B)$, which is $\Omega(\sqrt{B\min\{1+\log J,B\}})$ in that
regime. Third, when $\log J<128$ or $B<1024$ the quantity $\sqrt{B\min\{1+\log J,B\}}$ is
$O(\sqrt B)$, and a single walk already gives
$\Ex(S_B^{(1)})^{+}=\tfrac12\Ex|S_B|\ge\tfrac12(\Ex S_B^2)^{3/2}(\Ex S_B^4)^{-1/2}\ge\tfrac12\sqrt{B/3}$
by H\"older, since $\Ex S_B^2=B$ and $\Ex S_B^4\le3B^2$. Combining the three regimes,
$\Ex\max_j(S_B^{(j)})^{+}=\Theta(\sqrt{B\min\{1+\log J,B\}})$. This is the route by which maxima
of independent random walks fix the minimax rate for prediction with expert advice
\citep{cesabianchilugosi2006}.

\textbf{Proof idea.} Divide time into blocks of length $d$ and give $J$ actions private states
whose losses are perturbed by independent Rademacher signs within each block. Every observation
available when an action is chosen comes from an earlier block, and the supplied $m_t=c_{t-d}$
depends only on earlier signs, so the learner cannot predict the signs governing its current
block. Its expected loss is therefore $T/2$, while the best fixed action in hindsight benefits
from the maximum of $J$ independent random walks. Choosing the amplitude to make $\etwo\le\mathcal E$
gives the stated scale.

\begin{proof}[Proof of Theorem~\ref{thm:lower}]
Take $h=d$ and $\varepsilon=\tfrac12\sqrt{\mathcal E/T}$. Every outcome usable when choosing $A_t$
comes from a round $r\le t-d-1$, and $t\le bh$ with $h\le d$ forces $r<(b-1)h$, so $A_t$ is
independent of block $b$'s own signs; the same computation places the stale vector $m_t$ in blocks
strictly before $b$, so handing it to the learner changes nothing. Every algorithm's
expected loss therefore equals $T/2$, since its action is independent of the current block's signs and all states have mean
$\tfrac12$ marginally. Its regret is therefore exactly the comparator's
fluctuation advantage,
$\varepsilon d\,\Ex\bigl[\max_{j\le J}\bigl(\sum_{2\le b\le B}\sigma_b^{(j)}\bigr)^{+}\bigr]$ with
$B=\lfloor T/d\rfloor$, over the $B-1$ randomized blocks. Per round $\|c_t-m_t\|_\infty\le2\varepsilon$, because the supremum norm charges only the largest
coordinate and each coordinate moves by at most $2\varepsilon$ at a block boundary; the gap is
$\varepsilon$ across the first randomized block, where $m_t$ is still the neutral $c_1$, and lies
in $\{0,2\varepsilon\}$ thereafter. Hence $\etwo\le4\varepsilon^2T=\mathcal E$ holds
deterministically. The
budget must bind on the realized sequence, since a budget holding only in expectation would not
constrain the instance selected by Yao's principle. By Lemma~\ref{lem:rade} and the discussion following it, the expected maximum of the positive
parts of $J$ independent Rademacher sums of length $B-1$ is
$\Theta(\sqrt{(B-1)\min\{1+\log J,B-1\}})$, which covers both regimes and the case $J=1$. The
hypothesis $d\le T/2$ gives $B\ge2$ and hence $B-1\ge B/2$, so replacing $B-1$ by $B$ costs a
factor at most $\sqrt2$ in each branch of the minimum, including the saturated branch where the
minimum is $B-1$ rather than $1+\log J$. Substituting $B=\lfloor T/d\rfloor$ and $\varepsilon$
therefore gives $\Theta(\sqrt{d\mathcal E\min\{1+\log J,T/d\}})$. Rounds after the last complete block
carry no drift and change the constant only. When the oracle is withheld, the $\sqrt{KT}$ term is \citet[Thm.~5.1]{esposito2023}, whose
construction is stationary and therefore lies in the present model class.
\end{proof}

\begin{remark}[Where the bound is matched]\label{rem:tight}
Suppose $X_t=c_t(A_t)$. On instances whose drift is carried by a single action only one coordinate
of $c_t-m_t$ is nonzero, so $\|c_t-m_t\|_2=\|c_t-m_t\|_\infty$; on the family of
Section~\ref{sec:lower} that coordinate has magnitude at most $2\varepsilon\le1$, so the truncation
at one in $\lamtwo$ is inactive and $\lamtwo=\etwo$. The bound of \citet{baron2025} then reads
$\Ot(\sqrt{KT+d\,\etwo})$, whose $\sqrt{d\,\etwo}$ contribution Theorem~\ref{thm:lower} matches up
to the logarithmic factor at $J=1$, so on that family the drift term of the published upper bound
is unimprovable in order. Its $\sqrt{KT}$ term is not matched, the oracle supplying an entire stale
loss vector, so the oracle-assisted problem need not retain bandit exploration complexity. Neither
result provides an $\etwo$-adaptive upper bound under noisy outcomes.
\end{remark}

\section{Numerical detail}\label{app:num}

Sixteen studies were run; the three that bear most directly on the main claim and its boundary are
reported here as Experiments~1 to~3, each with an observation, an interpretation and a limitation,
and each named with the script that produces it. The code for all sixteen is available.
Experiment~1 draws $P$ from a Dirichlet; Experiment~2 uses a hand-structured funnel map; and
Experiment~3 uses the deterministic hard-instance map of Theorem~\ref{thm:lower}. All policies run on the same
environment parameters and common random-number streams, so a seed fixes everything except which
estimator is formed, and differences are taken within a seed before averaging. Round $r$'s outcome
is consumed before round $r+d+1$, matching Section~\ref{sec:setting}. Reported $\pm$ is one
standard error of the paired difference. \emph{Sample-path pseudo-regret} means
$\sum_tc_t(A_t)-\min_a\sum_tc_t(a)$ on one realization of the learner and environment randomness,
computed from the true losses rather than the sampled outcomes; averaging it over seeds estimates
the expected pseudo-regret the theorems bound.

The first two experiments probe the state channel of the claim, the first under tight control and
the second in an operational shape, and the last probes its temporal boundary. Experiment~1 holds
$K$ and the task difficulty fixed and varies the overlap, which separates
Theorem~\ref{thm:m1}'s prediction from the instances simply becoming easier; Experiment~2 shows the
regime in an environment built to the shape of a funnel and measures it against a rate-optimal
baseline. Experiment~3 then tests Theorem~\ref{thm:lower} on the drift channel, where pooling
cannot help. Theorems~\ref{thm:sexp3} and~\ref{thm:rep}, on the interleaved algorithm and on
coarsening, have no experiment here; both are stated and proved in Appendix~\ref{app:etc} and left
untested in this submission. Other studies whose conclusions were
superseded by a later control, or that replicated one already reported, are left to the released
code.

\begin{figure}[t]
\centering
\includegraphics[width=\textwidth]{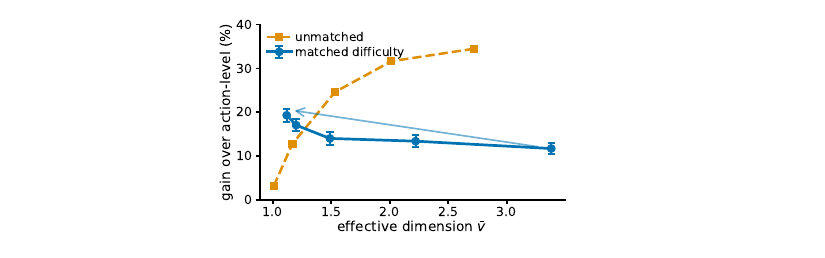}
\caption{Experiment~1. The same overlap sweep with and without holding the task
difficulty fixed. Matching the uniform-play regret reverses the trend, supporting the interpretation that the
decline in the unmatched sweep is driven by task-difficulty confounding rather than by the pooling
mechanism itself. Bars are one standard error of the paired
difference.}
\label{fig:matched}
\end{figure}

\FloatBarrier

\textbf{Experiment 1 (\texttt{matched\_control.py}). Overlap drives the gain, not easier tasks.}
Sweeping the overlap without further control finds the advantage falling as $\vhat$ falls, which
read at face value contradicts Theorem~\ref{thm:m1}, whose budget $V^-$ falls with $\bar v$.
That sweep is confounded, since raising the overlap also flattens $c_t=P\theta_t$ across actions, collapsing
the comparator gap so that both regrets vanish together. This experiment holds $K=40$, $|\Sset|=8$,
$T=10000$, $d=50$ and the difficulty fixed, and varies only $\vhat$. Difficulty is the
uniform-play regret $\sum_t\mathrm{mean}_ac_t(a)-\min_a\sum_tc_t(a)$, a property of the instance
with no algorithm in it, and the amplitude of $\theta$ is bisected in every cell to hit a common
target. The map keeps one contrast direction alive at every overlap and $\theta$ is aligned with
it, which is what lets the gap survive as the rows agree.

Observed. Across $20$ paired seeds the relative gain rises from $11.7$ per cent at
$\vhat=3.380$ to $19.3$ per cent at $\vhat=1.120$, with absolute gains $85.5\pm8.9$ to
$106.2\pm7.9$ and a correlation of $-0.929$ between $\vhat$ and the gain. The unmatched sweep
falls from $34.5$ to $3.1$ per cent over a comparable range (Figure~\ref{fig:matched}), so the
apparent contradiction is an artefact of the confound rather than of the mechanism.

Interpretation. The two sweeps disagree in sign, and the matched-difficulty sweep agrees with the overlap
dependence Theorem~\ref{thm:m1} predicts for the single-copy algorithm this experiment runs. Overlap is what pooling exploits,
and the unmatched sweep measures overlap confounded with an instance becoming trivial.

Limitation. At the two highest overlaps some seeds cannot reach the common target and the
realized difficulty is $5$ to $13$ per cent below it, so the matching is close rather than exact.
Since a smaller gap should if anything shrink the advantage, the measured rise is conservative in
that direction. The matched gains are also smaller in magnitude than the unmatched ones, which is
what fixing the difficulty costs.

\begin{figure}[t]
\centering
\includegraphics[width=\textwidth]{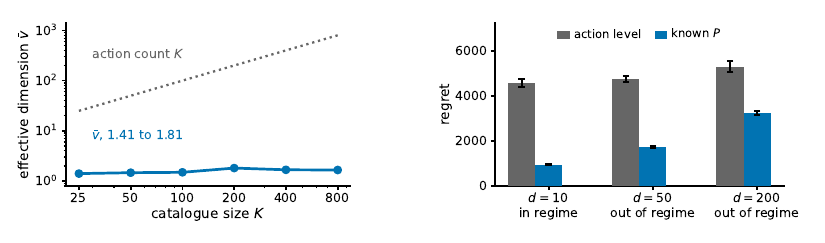}
\caption{Experiment~2, a recommendation funnel. \textbf{Left,} the effective dimension stays near two
while the catalogue grows thirty-twofold, so the gap between $K$ and $\vhat$ comes from the
catalogue having categories, not from a chosen $K$. \textbf{Right,} the two arms at $T=20000$
and $8$ paired seeds, labelled by whether $\bar v(d+1)\log K\le K+d$ holds.
Bars are one standard error. No real data is used anywhere in this experiment.}
\label{fig:funnel}
\end{figure}

\FloatBarrier

\textbf{Experiment 2 (\texttt{funnel.py}, \texttt{sota\_check.py}). A recommendation funnel, and a rate-optimal baseline on it.} Unlike Experiment~1, this
experiment does not draw $P$ from a Dirichlet; the Dirichlet construction is the assumption under
test rather than evidence for it. This
one builds an environment to the shape of a recommendation funnel instead. Actions are $K=200$
catalogue items, states are $|\Sset|=6$ ordered engagement depths, and the outcome is a delayed
conversion. Three structural facts are imposed because funnels have them, namely that items fall
into a modest number of categories sharing an engagement profile, that item quality is
heavy-tailed, and that the loss falls with engagement depth and drifts seasonally rather than
adversarially. The generated catalogue puts $74$ per cent of its mass on no engagement and $1.4$
per cent on the deepest. The same environments, seeds and delay convention then carry a comparison
against the hybrid Tsallis-plus-entropy \textsc{FTRL} of \citet{zimmert2020}, which attains the
optimal rate for delayed feedback, its delay term being additive rather than multiplicative. The
algorithms published since match that adversarial rate rather than improve on it, adding
adaptivity to stochastic instances instead \citep{schlisselberg2025}; all of them read the action
alone.

Observed. The effective dimension is $\vhat=1.97$ against $K=200$, a hundredfold gap, and it stays
between $1.41$ and $1.81$ as the catalogue grows from $25$ to $800$ items
(Figure~\ref{fig:funnel}). Against action-level weighting the state-pooled variant saves $79.3$,
$63.6$ and $38.9$ per cent at $d=10$, $50$ and $200$, the largest margins in this appendix.

\begin{table}[H]\centering\small
\caption{Regret on the funnel under three tunings (Experiment~2); grid-best columns are chosen
by reading these benchmarks.}
\label{tab:tuning}
\begin{tabular}{@{}lrrrr@{}}
\toprule
family, cell & \textsc{State-EXP3} & \citeauthor{zimmert2020} & same, grid-best & \textsc{State-EXP3}, grid-best \\
\midrule
funnel, $d=10$   &  949.0 & 2959.4 & 2413.8 & 27.0 \\
funnel, $d=50$   & 1728.1 & 3557.0 & 2556.4 & 95.9 \\
funnel, $d=200$  & 3244.8 & 4734.7 & 2807.7 & 199.7 \\
\bottomrule
\end{tabular}
\end{table}

Table~\ref{tab:tuning} sets the two against each other. With \textsc{State-EXP3} at the single
tuning used everywhere else here and \citet{zimmert2020} at the best of a six-point grid around its own value,
\textsc{State-EXP3} wins all three funnel cells, cutting regret by $31.5$ to $67.9$ per cent. With
both at their grid-best it wins all three again and by more. When only \citet{zimmert2020} is
grid-tuned, \textsc{State-EXP3} wins at $d=10$ and $d=50$ and loses at $d=200$.

Interpretation. The separation between $K$ and $\vhat$ arrives from catalogue structure rather than
from a tuned parameter, and it does not close as the catalogue grows, which is the claim the
Dirichlet families cannot make. Where the state-pooled variant is not held to the worse tuning of
the two, the state channel helps against a rate-optimal method, which is expected, since that
method cannot read the state at all; grid tuning helps it far more than it helps
\citet{zimmert2020}, consistent with pooling cutting estimator variance and lower variance
admitting a larger learning rate. The condition under which interleaving's bound would improve on
the action-level rate fails at $d=50$ and $d=200$, while the single-copy variant still saves most
of the regret there, which is the regime Theorem~\ref{thm:m1} was proved for.

Limitation. \emph{No real data is used.} There is no interaction log in this environment, nothing
here is fitted to one, and this is not a semi-synthetic benchmark in the usual sense of that
phrase. The structure is imposed by hand from qualitative properties, so it shows that the regime
is describable and self-consistent, not that any deployed system occupies it. The funnel cell at
$d=200$ lost when \citet{zimmert2020} alone is grid-tuned is a real loss and reported as one. No column is
a symmetric protocol: the two grids are finite and unequal, and \textsc{State-EXP3}'s grid-best
sits at its top multiplier in every cell, so its own minimum may lie outside the range searched.
Every grid-best multiplier is chosen by reading these benchmarks, so only the first column is a
procedure a practitioner could run. Eight paired seeds on the funnel.
The arms run $m=1$, covered by Theorem~\ref{thm:m1}, and the environment has no adversarial
component, so it exercises the pooling channel and not the drift channel.

\textbf{A heuristic baseline.} The best-state rule identifies the state with the lowest mean loss
and plays the action most likely to reach it, ignoring every other state. Across nine settings,
five families at $d=10$ and $50$ plus one drift cell, it leads seven and \textsc{State-EXP3} two.
Its two losses are the spread family, built so that the best action mixes over several good states
while a decoy action puts more mass than any other on the single best one: the rule takes the decoy
and records $1496\pm329$ and $1495\pm327$ at $d=10$ and $50$ against our $486\pm11$ and
$814\pm20$. Sweeping $T$ from $2500$ to $40000$ on that family and fitting regret to a power of $T$
gives an exponent of $1.00$ for the rule, $0.58$ for \textsc{State-EXP3} and $0.67$ for
action-level weighting, so the rule's regret grows linearly where both learners' grow sublinearly.
Its spread across seeds averages $4.6$ times ours over the nine settings. The rule needs one state
to be both best and reachable, and nothing in it detects when that fails, which is why the gain it
shows on seven settings is not a gain a practitioner can rely on.

\begin{figure}[t]
\centering
\includegraphics[width=\textwidth]{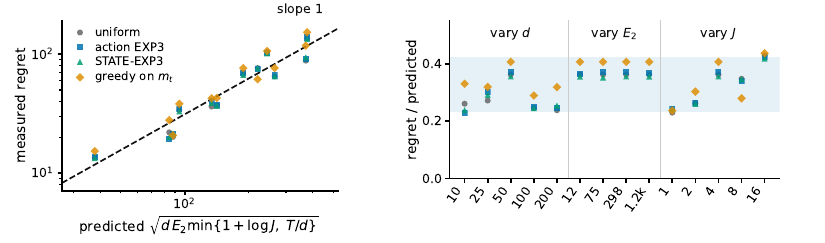}
\caption{Experiment~3, the drift channel. \textbf{Left,} measured regret against the scale
Theorem~\ref{thm:lower} predicts, over twelve cells in which $d$ varies $20$-fold, $\etwo$
$100$-fold and $J$ $16$-fold. The dashed line has slope one. \textbf{Right,} the same points
divided by that scale, grouped by which parameter the cell sweeps, with the shaded band spanning
the observed range. \textsc{State-EXP3} appears here only as one more algorithm on the hard
family, where it has no advantage.}
\label{fig:drift}
\end{figure}

\FloatBarrier

\textbf{Experiment 3 (\texttt{drift\_scaling.py}). The drift bound describes the right scaling.} The construction of
Theorem~\ref{thm:lower} at $K=40$, $T=8000$, $h=d$ and $12$ paired seeds, sweeping $d$ from $10$
to $200$, the amplitude $\varepsilon$ from $0.02$ to $0.20$ so that $\etwo$ runs from $12$ to
$1193$, and the drifting coordinates $J$ from $1$ to $16$. We run four policies: uniform, action-level \textsc{EXP3}, \textsc{State-EXP3},
and the greedy policy on the exact stale vector $m_t$, the last because
Theorem~\ref{thm:lower} claims to survive it. $\etwo$ is measured on the realized instance rather
than predicted, so the normalization uses the quantity the theorem uses.

Observed. The predicted scale $\sqrt{d\,\etwo\min\{1+\log J,T/d\}}$ ranges from $38$ to $377$
across the grid and the measured regret tracks it (Figure~\ref{fig:drift}). Normalized, the four
policies read $0.230$ to $0.429$, $0.228$ to $0.424$, $0.238$ to $0.417$ and $0.237$ to $0.437$,
a spread below $1.9$ against a tenfold range in the predictor. The oracle-assisted policy has the
largest mean ratio, $0.345$ against $0.311$, $0.314$ and $0.310$ for the three that never see
$m_t$.

Interpretation. The three arguments enter the predicted scale differently and the collapse holds
in all three sweeps, so it is the form of the bound rather than one fitted constant that tracks
the data. Greedy play on the exact $m_t$ sits at the top of the band rather than below it, which is what
the bound surviving the oracle amounts to in these runs.

Limitation. Every cell has $T/d>1+\log J$, so only the $1+\log J$ branch of the minimum is
exercised and the $T/d$ saturation branch, which needs $d$ comparable to $T$, is not probed. A
lower bound quantifies over all algorithms and four policies cannot establish that. What these
runs check is the scaling of the construction.

\textbf{Code and environment.} The code producing every reported number and figure is at
\expandafter\url\expandafter{\repourl}. It pins the software environment used for the reported
numbers, python 3.14.3 with numpy 2.4.2 and matplotlib 3.10.8, states the seed count for each
study, lists one command per numbered study, and commits the stored outputs each figure reads so
that every figure rebuilds without rerunning an experiment.

Taken together these experiments cover both channels. Experiments~1 and~2 probe the state channel
and are consistent with the effective dimension, rather than $|\Sset|$ or $K$, governing what
pooling buys; Experiment~1 had to control for task difficulty before an overlap sweep pointed that
way.
Among the state-channel experiments, the funnel is the only environment whose matrix is not
drawn from a Dirichlet and it reports the
largest margins here, on a structure imposed by hand rather than measured. It also places
\textsc{State-EXP3} against a rate-optimal delayed-bandit method: ahead in every cell when both are
tuned alike, behind at $d=200$ when that method alone is grid-tuned. Experiment~3 turns to the other
channel and finds the drift scale of Theorem~\ref{thm:lower} tracking the data, including for a
learner handed the stale losses, which is the cost no amount of pooling reaches.

\section{Proof of Theorem~\ref{thm:m1}: additive delay without interleaving}\label{app:m1}

Interleaving cannot give an additive bound. Copy $i$, which absorbs a share $V_i$ of the budget
$V=\sum_iV_i$, has regret $\sqrt{2V_i\log K}$, and
Cauchy--Schwarz over the $m$ copies gives $\sqrt{2mV\log K}$, tight when the $V_i$ agree, so the
$\sqrt m$ in Theorem~\ref{thm:sexp3} is not an artifact of the analysis. An additive bound has to
come from a single copy.

Throughout, $L_t=\sum_{r\le t-d-1}\hat c_r$ and $x_t\propto e^{-\eta L_t}$, so
$L_{t+1}-L_t=\hat c_{t-d}$, with $\hat c_r=0$ for $r<1$. The filtration $\mathcal F_t$ is the
environment's, as defined in Section~\ref{sec:setting}; under it
$x_t$ is $\mathcal F_{t-d-1}$-measurable, so $x_{r+d}$ is $\mathcal F_{r-1}$-measurable.

\paragraph{The drift lemma.} The natural route prices the change of measure from $x_r$ to
$x_{r+d}$ through the normaliser $Z_r$, whose reciprocal has no uniformly bounded moment. That is
avoidable. The pooled estimate satisfies
\[
\langle x_t,\hat c_t\rangle=\sum_ax_t(a)\frac{P(S_t\mid a)X_t}{q_t(S_t)}=X_t\le1
\]
identically, on every path, because the denominator is the state marginal of the same distribution
that supplies the numerator.

\begin{lemma}\label{lem:drift}
If $\eta\le1/(e(d+1))$ then $x_{t+1}(a)\le(1+1/d)\,x_t(a)$ for every $t$ and every $a$, surely, and
hence $x_{t+d}(a)\le e\,x_t(a)$.
\end{lemma}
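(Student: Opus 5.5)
The plan is a strong induction on $t$ that proves both claims together. The only input is the pathwise identity $\langle x_r,\hat c_r\rangle=X_r\le1$ displayed just above the lemma. The difficulty is that the update from $x_t$ to $x_{t+1}$ involves $\hat c_{t-d}$ weighted by $x_t$, whereas the identity controls $\hat c_{t-d}$ only under $x_{t-d}$. The induction hypothesis is used to pass between the two.

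\emph{The one-step ratio.} Since $L_{t+1}-L_t=\hat c_{t-d}$, write $\hat c=\hat c_{t-d}$ and get
\[
x_{t+1}(a)=\frac{x_t(a)e^{-\eta\hat c(a)}}{\sum_bx_t(b)e^{-\eta\hat c(b)}}.
\]
By Proposition~\ref{prop:pool}, $\hat c\ge0$, so the numerator is at most $x_t(a)$. Using $e^{-z}\ge1-z$, the denominator is at least $1-\eta\langle x_t,\hat c\rangle$. Everything therefore reduces to bounding $\langle x_t,\hat c_{t-d}\rangle$ pathwise by a constant $C$ with $\eta C\le1/(d+1)$. That constant would give
\[
x_{t+1}(a)\le x_t(a)\Big/\Bigl(1-\tfrac1{d+1}\Bigr)=(1+1/d)\,x_t(a).
\]

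\emph{The induction.} For $t\le d$ we have $\hat c_{t-d}=0$, so $x_{t+1}=x_t$ and both claims hold trivially. Now suppose the one-step bound holds at every step $s<t$. Chaining it over the $d$ steps from $t-d$ to $t$ gives
\[
x_t(b)\le(1+1/d)^d\,x_{t-d}(b)\le e\,x_{t-d}(b)\quad\text{for every }b.
\]
Because $\hat c\ge0$, this yields
\[
\langle x_t,\hat c_{t-d}\rangle\le e\,\langle x_{t-d},\hat c_{t-d}\rangle=e\,X_{t-d}\le e.
\]
With $C=e$, the condition $\eta e\le1/(d+1)$ is exactly the hypothesis $\eta\le1/(e(d+1))$, which closes the step at $t$. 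The second claim, $x_{t+d}(a)\le e\,x_t(a)$, is the same $d$-fold chaining applied once the one-step bound holds for all $t$.

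Each step is elementary, and everything holds surely because the identity is pathwise. The crux, and the only place where care is needed, is ordering the induction so that the $d$-step bound used at time $t$ relies only on one-step bounds at earlier times. Indexing by the step $s\to s+1$, and noting that the chain from $t-d$ to $t$ uses steps $s\le t-1$, makes this consistent.
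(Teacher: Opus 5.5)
Your proof is correct and matches the paper's argument step for step: a strong induction on the one-step bound, chaining it over steps $t-d,\dots,t-1$ to get $x_t\le e\,x_{t-d}$, contracting with $\hat c_{t-d}\ge0$ and using the pathwise identity $\langle x_{t-d},\hat c_{t-d}\rangle=X_{t-d}\le1$, and finally bounding the normaliser below by $1-\eta e\ge d/(d+1)$. One quibble: the $d$-step claim $x_{t+d}\le e\,x_t$ is not ``trivial'' at the base case $t\le d$, but it follows by chaining once the one-step bound holds for all $t$, exactly as your last paragraph says.
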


\begin{proof}
Strong induction on $t$. For $t\le d$ nothing is delivered and $x_{t+1}=x_t$. For $t>d$, assume the
claim for every $s<t$ and compose it over the $d$ steps from $t-d$ to $t$, giving
$x_t(a)\le(1+1/d)^dx_{t-d}(a)\le e\,x_{t-d}(a)$. Contracting with the fixed nonnegative vector
$\hat c_{t-d}$ and using the identity above,
$\langle x_t,\hat c_{t-d}\rangle\le e\langle x_{t-d},\hat c_{t-d}\rangle=eX_{t-d}\le e$. With
$Z_t=\sum_bx_t(b)e^{-\eta\hat c_{t-d}(b)}$ and $e^{-z}\ge1-z$, this gives
$Z_t\ge1-\eta\langle x_t,\hat c_{t-d}\rangle\ge1-\eta e\ge d/(d+1)$, so
$x_{t+1}(a)=x_t(a)e^{-\eta\hat c_{t-d}(a)}/Z_t\le x_t(a)/Z_t\le(1+1/d)x_t(a)$.
\end{proof}

What the induction bounds is
$\langle x_t,\hat c_{t-d}\rangle=X_{t-d}\,q_t(S_{t-d})/q_{t-d}(S_{t-d})$, the state-marginal ratio that the
$1/Z_r$ route could not reach. Individual $\hat c_r(a)$ remain unbounded; only their
$x_t$-weighted mass is controlled. \citet{cesabianchi2019} prove the action-level version under
$\eta\le1/(Ke(d+1))$ and \citet{thune2019} remove the $K$, using that their estimate is carried by
the single action played, so the weight cancels into a same-action ratio. A pooled estimate is
spread over every action and does not collapse that way; the identity above replaces the
cancellation.

\paragraph{Centring.} Let $\zeta_t=\hat c_t-X_t\mathbf 1$, with $\zeta_r=0$ for $r<1$. Exponential
weights is unchanged by a shift that is constant across actions, so $x_t$ is the same iterate and
this is a change of analysis, not of algorithm \citep{eldowa2024}. Keeping
$\gamma(s)=\Ex[X_t^2\mid S_t=s]$ inside the state sum,
\[
\Ex\bigl[\langle x_t,\zeta_t^2\rangle\mid\mathcal F_{t-1}\bigr]
=\sum_s\gamma(s)\Bigl[\frac{\sum_ax_t(a)P(s\mid a)^2}{q_t(s)}-q_t(s)\Bigr]\le v_t-1,
\]
each bracket being nonnegative by Jensen and $\gamma(s)\le1$, the final step being
Remark~\ref{rem:chi2}. Bounding $\langle x_t,\hat c_t^2\rangle$ by $v_t$ first and then subtracting
does not work: it leaves $v_t-\Ex[X_t^2]$.

\begin{proof}[Proof of Theorem~\ref{thm:m1}]
Since $\zeta_r(a)\ge-X_r\ge-1$ we have $\eta\zeta_r(a)\ge-\eta$, and Lagrange's remainder gives
$e^{-z}\le1-z+(e^\eta/2)z^2$ for $z\ge-\eta$. With $\log(1+y)\le y$, the potential
$W_t=\sum_a\exp(-\eta\sum_{r\le t-d-1}\zeta_r(a))$ obeys
$\log(W_{t+1}/W_t)\le-\eta\langle x_t,\zeta_{t-d}\rangle
+(\eta^2e^\eta/2)\langle x_t,\zeta_{t-d}^2\rangle$. Telescoping from $W_1=K$ and using
$\log W_{T+1}\ge-\eta\sum_{r\le T-d}\zeta_r(a^\ast)$ for the comparator $a^\ast$,
\[
\sum_{r=1}^{T-d}\langle x_{r+d},\zeta_r\rangle\le\sum_{r=1}^{T-d}\zeta_r(a^\ast)
+\frac{\log K}{\eta}+\frac{\eta e^\eta}{2}\sum_{r=1}^{T-d}\langle x_{r+d},\zeta_r^2\rangle .
\]
The shift cancels between the two sides. As $x_r$ is $\mathcal F_{r-d-1}$-measurable,
$\Ex[\langle x_r,\hat c_r\rangle]=\Ex[\langle x_r,c_r\rangle]$ and $\Ex[\hat c_r(a^\ast)]=
c_r(a^\ast)$, and the $d$ rounds undelivered by $T+1$ contribute at most $d$, so
\[
\Ex[R_T]\le\frac{\log K}{\eta}
+\frac{\eta e^\eta}{2}\sum_r\Ex\bigl[\langle x_{r+d},\zeta_r^2\rangle\bigr]
+\sum_r\Ex\bigl[\langle x_r-x_{r+d},\hat c_r\rangle\bigr]+d .
\]
Lemma~\ref{lem:drift} gives $x_{r+d}\le e\,x_r$ surely, and $x_{r+d}$ is
$\mathcal F_{r-1}$-measurable, so the second sum is at most $e\sum_r\Ex[v_r-1]\le eV^-$.

For the third, write $x_{r+d}(a)=x_r(a)e^{-\eta G_r(a)}/Z'$ with
$G_r=\sum_{u=r-d}^{r-1}\hat c_u\ge0$ and normaliser $Z'\le1$, distinct from the one-step $Z_t$
of the Lemma. Then
$x_r(a)-x_{r+d}(a)\le x_r(a)(1-e^{-\eta G_r(a)})\le\eta x_r(a)G_r(a)$ for every $a$, trivially
where the left side is negative. Since $\hat c_r\ge0$ and $x_r,G_r$ are
$\mathcal F_{r-1}$-measurable, the conditional expectation is at most $\eta\langle x_r,G_r\rangle$,
and $\Ex[\langle x_r,G_r\rangle]=\sum_{u=r-d}^{r-1}\Ex[\langle x_r,\hat c_u\rangle]\le d$: for each
such $u$, $x_r$ is $\mathcal F_{r-d-1}$-measurable and $r-d-1\le u-1$, so it leaves the conditional
expectation and $\Ex[\langle x_r,\hat c_u\rangle]=\langle x_r,c_u\rangle\le1$. This is
measurability, not independence, and at $u=r-d$ the two $\sigma$-fields coincide, so the step rests
on the timing convention of Section~\ref{sec:setting}. The sum is at most $\eta dT$.

Finally $\eta\le1/(2e)$ gives $e^{1+\eta}<3.3$, so
$\Ex[R_T]\le\log K/\eta+\tfrac{\eta}{2}(3.3V^-+2dT)+d$. For $f(\eta)=A/\eta+B\eta$ restricted to
$\eta\le\eta_{\max}$ one has $\min f\le2\sqrt{AB}+A/\eta_{\max}$, in the interior and in the binding
case alike; with $A=\log K$, $B=(3.3V^-+2dT)/2$ and $\eta_{\max}=1/(e(d+1))$ this is the stated
bound.
\end{proof}

The cap on $\eta$ binds only when $d\gtrsim T/(e^2\log K)$, and not at the settings of
Appendix~\ref{app:num}: the funnel's own rates are $0.0058$, $0.0031$ and $0.0016$ at
$d=10,50,200$ against caps $0.0334$, $0.0072$ and $0.0018$, so Theorem~\ref{thm:m1} covers those
runs as they were executed. Over three seeds and $20000$ rounds the induction has room to spare,
$\max_ax_{t+1}(a)/x_t(a)$ reaching $1.006$, $1.003$ and $1.002$ against the permitted $1.100$,
$1.020$ and $1.005$.

The two bounds are numerically close on the funnel, where $\bar v\approx2$: the additive form
carries $3.3(\bar v-1)+2d$ and the interleaved one $(d+1)\bar v$, and these cross near $\bar v=2$.
The additive form gains as $\bar v$ rises toward $|\Sset|$, and it describes the algorithm that is
actually run, which is the reason to prefer it.

\end{document}